\newcommand\scalemath[2]{\scalebox{#1}{\mbox{\ensuremath{\displaystyle #2}}}}
\DeclareMathOperator*{\argmin}{arg\,min}
\DeclareMathSymbol{\shortminus}{\mathbin}{AMSa}{"39}
\newtheorem{theorem}{Theorem}[section]
\newtheorem{proposition}{Proposition}
\icmltitlerunning{Active Learning on Attributed Graphs via Graph Cognizant Logistic Regression and Preemptive Query Generation}
\begin{document}

\twocolumn[
\icmltitle{Active Learning on Attributed Graphs via Graph Cognizant Logistic Regression and Preemptive Query Generation}




\begin{icmlauthorlist}
\icmlauthor{Florence Regol$^{\dagger}$}{mc}
\icmlauthor{Soumyasundar Pal}{mc}
\icmlauthor{Yingxue Zhang}{hw}
\icmlauthor{Mark Coates}{mc}
\end{icmlauthorlist}
\icmlaffiliation{mc}{Department of Electrical and Computer Eningeering, McGill University, Montr{\'e}al, QC,  Canada.}
\icmlaffiliation{hw}{ Huawei Noah's Ark Lab, Montr\'{e}al Research Center, Montr\'{e}al, QC, Canada}

\icmlcorrespondingauthor{Florence Regol}{florence.robert-regol@mail.mcgill.ca}

\icmlkeywords{Machine Learning, ICML}

\vskip 0.3in
]



\printAffiliationsAndNotice{}  

\begin{abstract}
 Node classification in attributed graphs is an important task in multiple practical settings, but it can often be difficult or expensive to obtain labels. Active learning can improve the achieved classification performance for a given budget on the number of queried labels. The best existing methods are based on graph neural networks, but they often perform poorly unless a sizeable validation set of labelled nodes is available in order to choose good hyperparameters. We propose a novel graph-based active learning algorithm for the task of node classification in attributed graphs; our algorithm uses graph cognizant logistic regression, equivalent to a linearized graph-convolutional neural network
(GCN), for the prediction phase and maximizes the expected error
reduction in the query phase. To reduce the delay experienced by a labeller interacting with the system, we derive a preemptive querying system that calculates a new query during the labelling process, and to address the setting where learning starts with almost no labelled data, we also develop a hybrid algorithm that performs adaptive model averaging of label propagation and linearized GCN inference. We conduct experiments on five public benchmark datasets, demonstrating a
significant improvement over state-of-the-art approaches and
illustrate the practical value of the method by applying it to a
private microwave link network dataset.
\end{abstract}
\let\thefootnote\relax\footnotetext{$\dagger$Work done as intern at Huawei Noah's Ark Lab, Montr\'{e}al Research Center.}
\section{Introduction}

\noindent 
In many classification tasks there are explicit or implicit
relationships between the data points that need to be classified. One
can represent such data using a graph, where an edge between two nodes
(data points) indicates the presence of a relationship. The resultant
task of node classification has attracted significant attention from
the graph-learning research community, and numerous graph
learning architectures have been developed that yield impressive
performance, especially in semi-supervised
settings~\cite{defferrard2016,kipf2017,hamilton2017,velivckovic2018,zhuang2018,gao2018b,liu2019}. In such cases, knowledge of the graph topology can compensate for scarcity of labelled data.

In practice, the semi-supervised classification task often arises in
scenarios where it is challenging or expensive to obtain labels. If we
have the opportunity to decide which nodes to query, then we should
try to select the most informative nodes that lead to the best
classification accuracy. This is, in a nutshell,
the goal of active learning; as we acquire labels, we make decisions
about which label to query next based on what we have learned. This is
important in applications such as medical imaging, where generating
labels requires considerable valuable time from domain
experts~\cite{hoi2006, gal2017, kurzendorfer2017}. The development of active learning algorithms for node classification in graphs can be motivated by applications of graph convolutional neural networks (GNNs) in the medical field. For example, in~\cite{parisot2018}, GNNs are used to classify brain scan images, with the goal of predicting disease outcomes or detecting the presence of a disorder. Although we may have access to many brain scans and can specify relationships between them (thus building a graph), obtaining labels for them is expensive because it requires attention from medical experts. 

The early research that applies active learning to graph data mainly
focuses on the non-attributed graph setting~\cite{zhu2003b, ji2012,ma2013}. We focus on node classification for attributed graphs, so the recently
proposed GNN-based
methods~\cite{cai2017,gao2018a} are more aligned with the task we
address. The results reported in these works are usually based on GNNs
with hyperparameters that have been optimized using a large labelled
validation set. This is an unrealistic setting; if we have access to
such a large amount of labeled data, we should use much more of it to
train the classifier. As we illustrate in the experiments in
Section~\ref{sec:experiment}, if hyperparameters are not optimized,
but are chosen randomly from a reasonable range of candidate values,
the performance of the GNN-based active learning methods deteriorates
dramatically.

In this work, we aim to address the limitations of the GNN methods. We
propose an algorithm that is based on the Expected Error Minimization
(EEM) framework~\cite{settles2009}. In this framework, we select the
query that minimizes the expected classification error according to our current model. We use the simplified graph convolution (SGC)~\cite{wu2019}, a graph-cognizant logistic regression, as a
predictive model for the labels. This model, which can be derived as a
linearization of the graph convolutional neural network~\cite{wu2019},
performs much better when there is limited data compared to a GNN with
suboptimal hyperparameters, and achieves competitive accuracy as the
number of labels increases.

Most active learning techniques involve initial training of a model and then an iterative process of (i) identifying the best query by some criterion (the core step of active learning); (ii) obtaining the label from an oracle; and (iii) updating the model. In an interactive application, this can lead to a delay if the query generation of step (i) is not extremely fast. Although it is principled and competitive to other approaches, the EEM algorithm does have the disadvantage of an increased computational overhead. However, we note that a delay will also be introduced at step (ii); human labelling can take seconds (document categorization~\cite{settles2009}) to minutes (cancer diagnosis from skin lesion images~\cite{gal2017}, MRI tumour segmentation~\cite{kurzendorfer2017}, or fault detection in microwave link networks). With this in mind, the interactive delay can be reduced or even eliminated if the model update and query identification steps can be started and completed while the labelling is conducted. We develop such a preemptive strategy, based on a prediction of the labelling from the oracle.  

In summary, our paper makes the following contributions: (i) we
propose a practical approach for active learning in graphs that does
not have the unrealistic requirement of a validation set for
hyperparameter tuning; (ii) we extend the proposed approach to
introduce preemptive query generation in order to reduce or eliminate
the delay experienced by a labeller during interaction with the
system; (iii) we derive bounds on the error in risk evaluation associated with the preemptive prediction; (iv) we analyze performance on five public benchmark datasets and show a significant improvement compared to state-of-the-art GNN active learning methods (and label propagation strategies); (v) we illustrate the practical benefit of our method by demonstrating its application to a private, commercial dataset collected for the task of identifying faulty links in a microwave link network. 

\section{Related Research}

\subsection{Active learning on non-attributed graphs}

Many methods for active learning on graphs without node or edge
features are based on the idea of of propagating label information across the graph, and we hence refer to them throughout the paper as
label-propagation methods. The most successful techniques are
all based on the Binary Random Markov Field (BRMF) model. The model allows one to define a posterior on the unknown labels conditioned on the graph topology and the observed node labels. This model provides an effective mechanism for representing smoothness of labels with respect to the graph, but evaluating the posterior is a combinatorial problem. As a result, researchers have introduced relaxations or approximation strategies. \cite{zhu2003a} relax the BMRF to a Gaussian Random Field (GRF) model, and~\cite{zhu2003b, ji2012} and subsequently~\cite{ma2013} also employ this model to derive active learning methods. More recently,~\cite{berberidis2018} have applied the expected change maximization strategy to a GRF model. \cite{jun2016} takes
another approach by proposing a two-step approximation (TSA)
of the intractable combinatorial problem rather than relaxing the BMRF model.

These strategies offer the advantage of being principled methods that directly target the quantity we want to optimize, but label propagation-based models cannot take into account node features and consequently must rely on strong assumptions regarding the relationships between the graph topology and the data. Most label propagation methods struggle if the graph is not connected and do not usually translate well to an inductive setting, because query decisions rely on the knowledge of the complete graph topology.

\subsection{Graph neural network methods for active learning}

\cite{cai2017} leverage the output of a Graph Convolution Network
(GCN)~\cite{kipf2017} to design active learning metrics. Their method
is to alternate during the training of the GCN between adding one node
to the training set and performing one epoch of training. Selection of
the query node is based on a score that is a weighted mixture of three
metrics covering different active learning strategies: an uncertainty
metric, a density-based metric and a graph centrality metric. The
uncertainty metric is obtained by taking the entropy of the softmax
output given by the current GCN model. The density metric is based on the GCN node embeddings; the embeddings are clustered and the distance between each node's embedding and the centre of its cluster is computed. A more central embedding indicates a more representative
node. The graph centrality metric is independent of the GCN and only
relies on the position of the node in the graph. The weights change as
more nodes are added to the labelled set, in order to reflect the
increased confidence in the two metrics that are derived from the
output of the GCN. The weight adaptation schedule in~\cite{cai2017} is
fixed;~\cite{gao2018b} propose an alternative multi-armed bandit
algorithm that learns how to balance the contributions of the
different metrics. They argue that this mechanism can better adapt to the varying natures of different datasets.

\section{Problem Setting}
\subsection{Pool-based Formulation}
We consider the problem of active learning on an attributed graph
$\mathcal{G} = (\mathcal{V},\mathcal{E})$ for node classification using feature matrix $\mathbf{X}$ and
labels $\mathbf{Y}$.  The nodes are partitioned into two sets: a small initial
labelled set $\mathcal{L}_0$ from with node
labels $\mathbf{Y}_{\mathcal{L}_0}$ ($|\mathcal{L}_0|\ll |V|$), and a set
$\mathcal{U}_0$ consisting of the remaining unlabelled nodes. The
algorithm is given a budget of $b$ nodes that it can query from
$\mathcal{U}_0$ to augment $\mathbf{Y}_{\mathcal{L}_0}$. We denote by
$\mathcal{L}_t$ and $\mathcal{U}_t$ the sets of labelled and
unlabelled nodes, respectively, after $t$ nodes have been added to the
initial labelled set. The pool-based active learning formulation that
we are considering consists of three phases:

\begin{enumerate}[leftmargin=*]
    \item \textbf{Prediction Step} : $\mathbf{X}$, $\mathcal{G}$ and the current node labels,
      $\mathcal{L}_t$, are used to infer the labels of the
      nodes $\mathcal{V}$.
    \item \textbf{Query Step} : Until the budget is exhausted,
      select a node $q_t^* \in \mathcal{U}_t$ to query and
      to add to the labelled set $\mathcal{L}_t$.
\item \textbf{Labeling Step} : The oracle takes time $\Delta$ to label $q_t^*$. We update the sets: $\mathcal{U}_{t+1} = \mathcal{U}_t \setminus \{q_t^*\}$, $\mathcal{L}_{t+1} = \mathcal{L}_t \cup \{q_t^*\}$.
\end{enumerate}

 The goal is to select the best node $q_t^*$ to append to
$\mathcal{L}_t$ at each iteration $t$, in order to optimize the
prediction performance throughout the query process. We are not only
interested in the end result after exhausting the query budget, but
also in how quickly we can increase accuracy. Acquiring labels is
presumed to be expensive, so a solution that reaches
competitive performance with fewer nodes is desired.

In addition to the {\em transductive} setting outlined above, where we
know the entire graph and all attributes, we also consider an {\em
  inductive} setting, where our goal is to maximize performance over
an additional set of nodes $\mathcal{T}$; we know that these are
connected to the graph in some fashion, but we cannot query them and
we do not know the edges or features during the active learning
process.

\subsection{Reducing interaction delay}

With the phases outlined above, the active learning algorithm stalls
while waiting for the oracle to label $q_t^*$ at the third phase and then the oracle must
wait while the algorithm computes the best subsequent query node
$q_{t+1}^*$. This is inefficient and, for a human oracle, frustrating. We can address this by requiring the active learning
algorithm to identify the query node $q_t^*$ using the label set
$\mathcal{L}_{t-1}$. If the labelling time and the query computation
time are similar, then neither the oracle nor the algorithm stalls for
long. While the oracle is generating the label for $q_{t}^*$, this
{\em preemptive} active learning algorithm identifies in parallel the
best query node $q_{t+1}^*$ using
$\mathcal{L}_{t}$. Figure~\ref{fig:timeline} compares the timelines of
the standard single-query active learning procedure (query
generation algorithm waits for the oracle and vice versa) with the
preemptive strategy where labelling and query generation are performed
in parallel.

\begin{figure}[h]

\begin{subfigure}{\columnwidth}
  \begin{center}
  \begin{tikzpicture}[scale = 1]
\draw[thick, -Triangle] (0,0) -- (7cm,0) node[font=\scriptsize,below left=3pt and -8pt]{Time};

\foreach \x in {0,1.7,3.2,4.9}
\draw (\x cm,3pt) -- (\x cm,-3pt);

\foreach \x/\descr in {1.5/\text{$\nu$},3.5/\text{ $\nu+\Delta$},5/\text{$2\nu+\Delta$}}
\node[font=\scriptsize, text height=1.75ex,
text depth=.5ex] at (\x,-.3) {$\descr$};

\foreach \x/\perccol in
{3/100,4/75,5/0}
\draw[red, line width=4pt] 
(0,-.7) -- +(1.7,0);

\foreach \x/\perccol in
{1/100,2/75,3/25,5/0}
\draw[lightgray!\perccol!green, line width=4pt] 
(1.7,.5) -- +(1.5,0);

\foreach \x/\perccol in
{3/100,4/75,5/0}
\draw[red, line width=4pt] 
(3.2,-.7) -- +(1.7,0);

\foreach \x/\perccol in
{1/100,2/75,3/25,4/0}
\draw[lightgray!\perccol!green, line width=4pt] 
(4.9,.5) -- +(1.5,0);

\draw [thick ,decorate,decoration={brace,amplitude=5pt}] (1.7,0.7)  -- +(1.5,0) 
      node [black,midway,above=4pt, font=\scriptsize] {$\mathbf{Y}_{\mathcal{L}_t}$ query selection $\rightarrow  q^*_t$};
\draw [thick,decorate,decoration={brace,amplitude=5pt}] (4.9,-.9) -- +(-1.7,0)
      node [black,midway,font=\scriptsize, below=4pt] {Oracle labels $\mathbf{y}_{q^*_{t}}$.$ \mathbf{Y}_{\mathcal{L}_{t+1}} = [ \mathbf{Y}_{\mathcal{L}_t}, \mathbf{y}_{q^*_{t}} ] $};
\draw [thick,decorate,decoration={brace,amplitude=5pt}] (4.9,0.7) -- +(1.5,0)
      node [black,midway,above=4pt, font=\scriptsize] {$ \mathbf{Y}_{\mathcal{L}_{t+1}} \rightarrow q^*_{t+1}$};

\end{tikzpicture}
\end{center}
    \caption{Timeline for the standard active learning process.}
  \end{subfigure}
\centering
\begin{subfigure}{\columnwidth}
  \begin{center}
\begin{tikzpicture}[scale = 1]
\draw[thick, -Triangle] (0,0) -- (7cm,0) node[font=\scriptsize,below left=3pt and -8pt]{Time};

\foreach \x in {0,1.7,3.5,5.2}
\draw (\x cm,3pt) -- (\x cm,-3pt);

\foreach \x/\descr in {1.5/\text{$\nu$},3.5/\text{ $2\nu$},5.2/\text{$3\nu$}}
\node[font=\scriptsize, text height=1.75ex,
text depth=.5ex] at (\x,-.3) {$\descr$};


\foreach \x/\perccol in
{3/100,4/75,5/0}
\draw[red, line width=4pt] 
(0,-.7) -- +(1.7,0);
\foreach \x/\perccol in
{3/100,4/75,5/0}
\draw[red, line width=4pt] 
(1.75,-.7) -- +(1.7,0);
\foreach \x/\perccol in
{3/100,4/75,5/0}
\draw[red, line width=4pt] 
(3.5,-.7) -- +(1.7,0);
\foreach \x/\perccol in
{1/100,2/75,3/25,5/0}
\draw[lightgray!\perccol!green, line width=4pt] 
(0,.5) -- +(1.5,0);

\foreach \x/\perccol in
{1/100,2/75,3/25,4/0}
\draw[lightgray!\perccol!green, line width=4pt] 
(1.75,.5) -- +(1.5,0);
\foreach \x/\perccol in
{1/100,2/75,3/25,4/0}
\draw[lightgray!\perccol!green, line width=4pt] 
(3.5,.5) -- +(1.5,0);

\draw [thick ,decorate,decoration={brace,amplitude=5pt}] (0,0.7)  -- +(1.5,0) 
      node [black,midway,above=4pt, font=\scriptsize] {$\mathbf{Y}_{\mathcal{L}_{t-1}} \rightarrow \hat{q}^*_t$};
\draw [thick,decorate,decoration={brace,amplitude=5pt}] (3.5,-.9) -- +(-1.7,0)
      node [black,midway,font=\scriptsize, below=4pt] {Labeling process $\mathbf{y}_{\hat{q}^*_{t}}$};
\draw [thick,decorate,decoration={brace,amplitude=5pt}] (1.75,0.7) -- +(1.5,0)
      node [black,midway,above=4pt, font=\scriptsize] {$\mathbf{Y}_{\mathcal{L}_{t}} \rightarrow \hat{q}^{*}_{t+1}$};

  \end{tikzpicture}
  \end{center}
  \caption{Timeline for preemptive active learning process.}
\end{subfigure}
\caption{A comparison of the timelines of the standard single-query
  active learning process and the proposed preemptive process.}
\label{fig:timeline}
\end{figure}
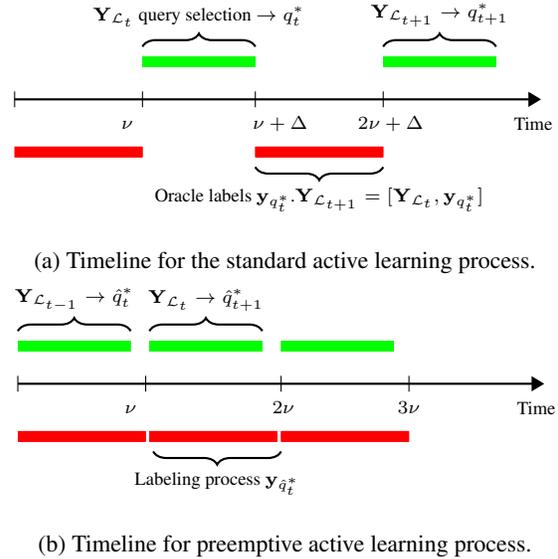

\section{Methodology}

\subsection{Expected error Minimization (EEM)}

An active learning algorithm based on error reduction
selects the query $q^*$ to minimize the expected error. For a classification task, the zero-one error is a suitable
choice. Denoting by $\mathcal{U}_t^{\shortminus q}$ the set of
unlabelled nodes after $t$ iterations of active learning with node $q$
removed, i.e., $\mathcal{U}_t^{\shortminus q} \triangleq
\mathcal{U}_t\setminus\{q\}$. The labels associated with this set are $\mathbf{Y}_{\mathcal{U}_t\setminus\{q\}}$.
Following~\cite{jun2016}, we can define
$R^{+q}_{|\mathbf{Y}_{\mathcal{L}_t}}$, the risk of adding node
$q \in \mathcal{U}_t$ given the current known label set
$\mathcal{L}_t$, as:
\begin{equation}
\scalemath{0.9}{
R^{+q}_{|\textbf{Y}_{\mathcal{L}_t}} \triangleq
 E_{\mathbf{y}_q}\Big[E_{\mathbf{Y}_{\mathcal{U}_t^{\shortminus q}}} \big[ \tfrac{1}{|\mathcal{U}_t^{\shortminus q}|} \sum_{i \in \mathcal{U}_t^{\shortminus q}} \mathds{1}[\hat{\mathbf{y}}_i \neq \mathbf{y}_i | \mathbf{y}_q, \mathbf{Y}_{\mathcal{L}_t}]\big]\Big]}
 \end{equation}
 Here $\hat{\mathbf{y}}_i$ is the label prediction at node $i$. We thus
 calculate expected error by summing error probabilities
 over the unlabelled set, minus the node $q$ we are considering.
 Define
 $\varphi^{+q}_{i,k, \mathbf{Y}_{\mathcal{L}_t}} \triangleq \Big(1- \displaystyle{
   \max_{k' \in K}}p(\mathbf{y}_i=k'|\mathbf{Y}_{\mathcal{L}_t}, \mathbf{y}_q=k)\Big)$, where $K$
 is the set of classes. If the query node
 $\mathbf{y}_q$ has label $k$, then $\varphi^{+q}_{i,k, \mathbf{Y}_{\mathcal{L}_t}}$
 represents the probability of making an error in the prediction
 $\hat{\mathbf{y}}_i$ of the label of node $i$. If we can
 compute the distribution $p(\mathbf{y} | \cdot )$, we can evaluate the risk of
 querying $q$:
\begin{equation}
  R^{+q}_{|\mathbf{Y}_{\mathcal{L}_t}} = \tfrac{1}{|\mathcal{U}_t^{\shortminus q}|}
  \sum_{k \in K} \sum_{i \in \mathcal{U}_t^{\shortminus q}} 
  \varphi^{+q}_{i,k, \mathbf{Y}_{\mathcal{L}_t}}  p(\mathbf{y}_q = k | \mathbf{Y}_{\mathcal{L}_t}) 
\end{equation}
The query algorithm selects the risk-minimizing node $q_t^*$:
\begin{equation}
    q_t^* = \argmin_{q \in \mathcal{U}_t} R^{+q}_{|\mathbf{Y}_{\mathcal{L}_t}}
\end{equation}
It remains to define the probabilistic model $p(\mathbf{y} | \cdot )$.

\subsection{Graph-cognizant logistic regression}

We propose to use a graph-cognizant logistic regression model to
obtain $p(\mathbf{y} | \cdot )$. Such a model was introduced by~\cite{wu2019},
where the SGC is derived as a simplified (linearized) version of the graph
convolutional network of~\cite{kipf2017}.  \cite{wu2019} showed that the simplified model can achieve competitive performance for a
significantly lower computational cost. In the EEM approach to active
learning, we must learn a new model for every potential query node, so
it is essential that the computational cost is relatively low. The
SGC meets our requirements: its
computational requirements are moderate and it takes into account the
graph structure and node features.

For a graph with adjacency matrix $\mathbf{A}$, let $\tilde{\mathbf{A}} = \mathbf{A}+\mathbf{I}$, $\mathbf{D}$ be
the degree matrix, and $\tilde{\mathbf{D}} = \mathbf{D}+\mathbf{I}$. We then define $\mathbf{S} \triangleq
\tilde{\mathbf{D}}^{-\frac{1}{2}}\tilde{\mathbf{A}}\tilde{\mathbf{D}}^{-\frac{1}{2}}$. This can
be interpreted as a degree normalized symmetrized adjacency matrix
(after self-loops have been added by the identity matrix $\mathbf{I}$). 
The prediction model has the form:
\begin{equation}
  \hat{\mathbf{Y}} =  \sigma(\mathbf{S}^{\ell}\mathbf{X}\mathbf{W})\,.
\end{equation}
We define $\tilde{\mathbf{X}}_{\mathcal{G}} \triangleq \mathbf{S}^{\ell}\mathbf{X}$; this can be interpreted as
graph-based preprocessing of node features. The parameter
$\ell$ controls the number of hops that are considered when
generating the final node representation. Usually using a 2-hop ($\ell=2$) neighborhood yields good results.

\subsection{Graph EEM (GEEM)}

Using the SGC model, we can compute a
risk for each query node. At each step $t$, we use the current known
labels $\mathbf{Y}_{\mathcal{L}_t}$ to find the weights
$\mathbf{W}_{ \mathbf{Y}_{\mathcal{L}_t}}$ by minimizing the error for predictions
$\hat{\mathbf{Y}}_{\mathcal{L}_t} = \sigma(\tilde{\mathbf{X}}_{\mathcal{L}_t}\mathbf{W}_{\mathbf{Y}_{\mathcal{L}_t}})$.
We use a standard iterative algorithm for maximum
likelihood logistic regression with L2 regularization (e.g., the
scikit-learn liblinear solver). We can then compute $p(\mathbf{y}_q = k | \mathbf{Y}_{\mathcal{L}_t}) = \sigma(\tilde{\mathbf{x}}_q\mathbf{W}_{\mathbf{Y}_{\mathcal{L}_t}})^{(k)}$, where the index $(k)$
indicates that we extract the $k$-th element of the vector. Then for each candidate node $q$, for each possible
class $k$, we solve:
\begin{equation}
  \hat{\mathbf{Y}}_{\mathcal{L}_t, +q,\mathbf{y}_k} = \sigma(\tilde{\mathbf{x}}_{\mathcal{L}_t, +q,\mathbf{y}_k}\mathbf{W}_{\mathbf{Y}_{\mathcal{L}_t}, +q,\mathbf{y}_k})\,.
\end{equation}
Here the notation $+q, \mathbf{y}_k$ indicates that we are adding node $q$ to
the labelled set and assigning it label $\mathbf{y}_k$.  For the adopted model,
$ \varphi^{+q}_{i,k, \mathbf{Y}_{\mathcal{L}_t}} = (1- \displaystyle{\max_{k' \in
    K} } \sigma(\tilde{\mathbf{x}}_{i}\mathbf{W}_{\mathbf{Y}_{\mathcal{L}_t}, +q,\mathbf{y}_k})^{(k')} )
$. The node to query is then the one that minimizes the risk:
\begin{equation}
\begin{split}
 q^*_t =  \argmin_{q \in \mathcal{U}_t} \tfrac{1}{|\mathcal{U}_t^{-q}|}   \sum_{k \in K}  \sum_{i \in \mathcal{U}^{\shortminus q}_t} \varphi^{+q}_{i,k, \mathbf{Y}_{\mathcal{L}_t}} \sigma(\tilde{\mathbf{x}}_q\mathbf{W}_{ \mathbf{Y}_{\mathcal{L}_t}})^{(k)}
\end{split}
\label{eq:eem}
\end{equation}

From this formulation, we can see that we first have to evaluate
$p(\mathbf{y} | \mathbf{Y}_{\mathcal{L}_t})$, then calculate $p(\mathbf{y} | \mathbf{Y}_{\mathcal{L}_t}, \mathbf{y} _q=k)$ for each of the $|K|$ potential augmented labelled sets $\mathcal{L}^{+q, \mathbf{y}_{k'}}$. 
This implies that we have a computational complexity of
$\mathcal{O}(|\mathcal{U}| |K| M)$, where $M$ represents the complexity
associated with training the model. For logistic regression,
this is the overhead involved in learning the weights
$\mathbf{W}_{\mathbf{Y}_{\mathcal{L}_t}}$. If the evaluation of $p(\mathbf{y} | \cdot )$ is
computationally expensive, then the time required to select a query
node can become prohibitive. This is a common disadvantage of
the expected reduction strategies~\cite{settles2009}. It then becomes apparent why using the
linearized version of the GCN is important.

The proposed algorithm requires the choice of very few
hyperparameters (only the number of hops $\ell$ and logistic regression parameters).
This contrasts with the active learning
approaches based on graph neural networks, where there are multiple hyperparameters that must be selected, and suboptimal choices can have a major impact on performance.

\subsection{Preemptive Query (PreGEEM)}

In many practical active learning scenarios, labelling is performed by a human, and it is often time-consuming; labelling a single data point (node) can take tens of seconds or minutes. It is desirable to have the next query identified as soon as the labeller has completed the labelling task. With the EEM algorithm formulated above, this is impossible, because the query identification in~\eqref{eq:eem} uses the label associated with the previous query node.

In this section, we outline an alternative approach that performs
pre-emptive query calculation, using the labelling time to identify
the next node to query. Instead of waiting for the oracle to label
$q^*_{t-1}$ to start the identification of $q^*_{t}$ during iteration
$t$, we propose to approximate the risk before knowing
$\mathbf{y}_{q^*_{t-1}}$. The direct approach is to replace the risk
$R^{+q_t}_{|\mathbf{Y}_{\mathcal{L}_t}}$ with the expectation over the possible
values of $\mathbf{y}_{q^*_{t-1}}$, but this increases the computational
complexity by a factor of
 $|K|$, which is highly undesirable. To avoid this penalty, we further approximate
 $R^{+q}_{|\mathbf{Y}_{\mathcal{L}_{t-1}}, q^*_{t-1}}$ using the value of risk
 for the label at
 the mode of $p(\mathbf{y}_{q_{t-1}^*}| \mathbf{Y}_{\mathcal{L}_{t-1}})$. Effectively, we use the
 predicted label $\hat{\mathbf{y}}_{q^*_{t-1}}$ of the previous model
 $p(\cdot | \mathbf{Y}_{\mathcal{L}_{t-1}})$ to form an augmented labelled set
$\mathbf{Y}'_{\mathcal{L}_{t}} =  \{\mathbf{Y}_{\mathcal{L}_{t-1}} \cup \{ \hat{\mathbf{y}}_{q^*_{t-1}} \}\}$ and define an approximate risk:
 \begin{equation}
 \scalemath{0.9}{
\hat{R}^{+q}_{|\mathbf{Y}'_{\mathcal{L}_{t}}} \triangleq 
 E_{\mathbf{y}_q}\Big[E_{\mathbf{Y}_{\mathcal{U}_t^{\shortminus q}}} \big[ \tfrac{1}{|\mathcal{U}_t^{\shortminus q}|} \sum_{i \in \mathcal{U}_t^{\shortminus q}} \mathds{1}[\hat{\mathbf{y}}_i \neq \mathbf{y}_i | \mathbf{y}_q, \mathbf{Y}'_{\mathcal{L}_t}]\big]\Big]}\,.
 \end{equation}
The query node is then $\hat{q}_t^* =  \displaystyle{\argmin_{q \in
  \mathcal{U}_t}}\hat{R}^{+q}_{|\mathbf{Y}_{\mathcal{L}'_{t}}}$. We call this new approach the Preemptive Graph EEM (PreGEEM).

 Figure~\ref{fig:risk} compares the evolution of the two active learning algorithms GEEM and PreGEEM for a small subset of nodes to illustrate how using the approximated risk can impact the query process. We see that the evaluated risks are similar, and although the ordering of query nodes differs, after five steps the same nodes have been selected by both algorithms.

\begin{figure}
    \centering
    \includegraphics[scale=0.5]{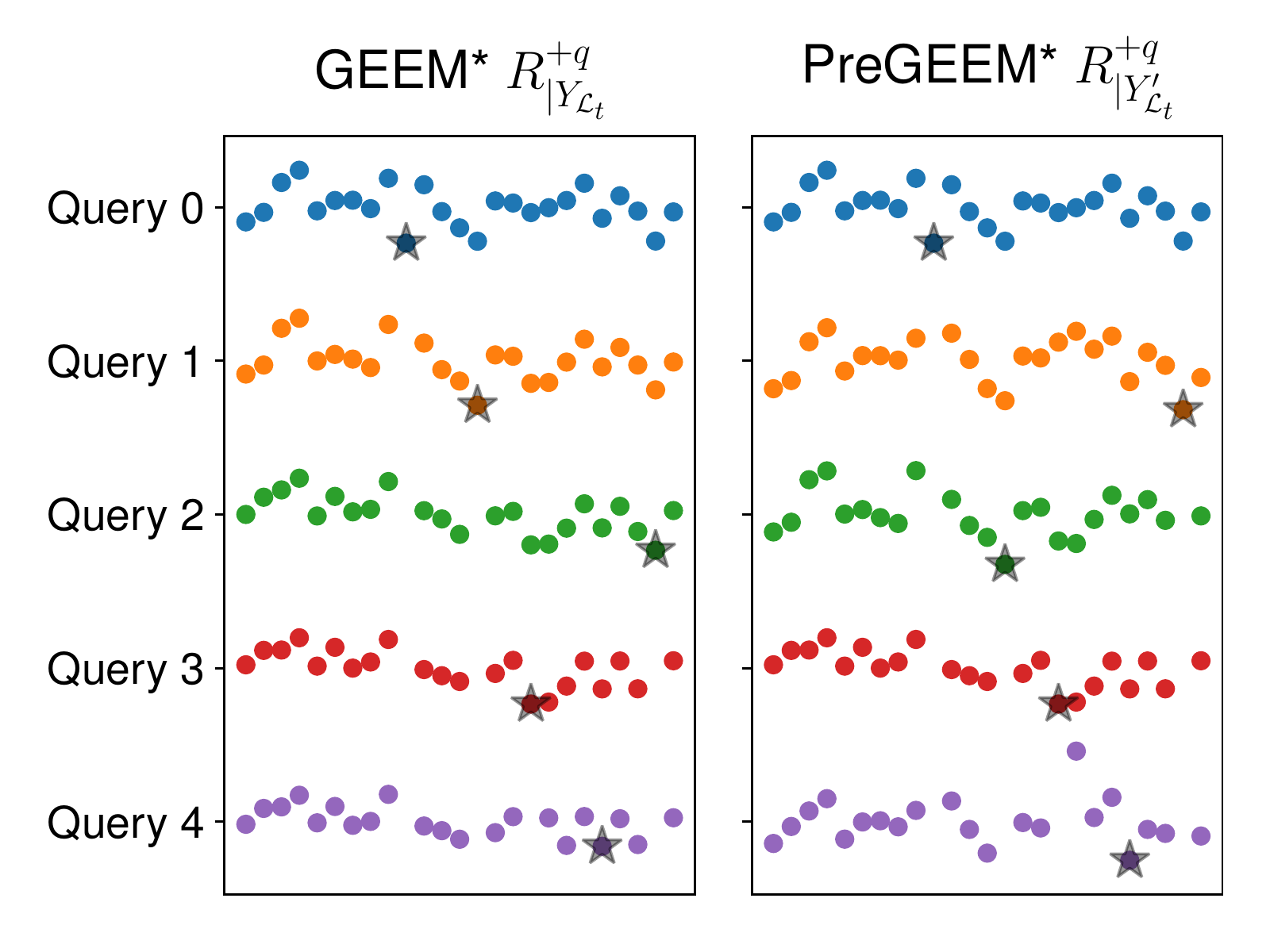}
    \caption{Risk comparison for GEEM vs PreGEEM. This follows the risk computations for 25 nodes in cora dataset for one trial. The black star indicates which node was selected, (following the algorithm, it is the one with the lowest expected risk.)}
    \label{fig:risk}
\end{figure}

\subsection{Bounds on the PreGEEM Risk Error}

We now present bounds on the risk estimation error that can arise by using a predicted label. We focus on the one-step error
$|R^{+q}_{|\mathbf{Y}'_{\mathcal{L}_{t}}} - R^{+q}_{|\mathbf{Y}_{\mathcal{L}}}|$, where
the labelled sets differ for only one label. For clarity, we first derive a bound for the binary classification task. We then state the more general bound for multiclass classification. Complete proofs are provided in the supplementary material.

The following proposition, which follows straightforwardly from
Theorem 1 and Lemma 1 in~\cite{sivan2019}, bounds the difference in
prediction values for two label sets that differ for $m$ data points. The bound is expressed in terms of the regularization
parameter $\lambda$ and the graph pre-processed feature
vectors $\tilde{\mathbf{x}}_i$.
\begin{proposition}
For weights $\mathbf{w}_1$ and $\mathbf{w}_2$ derived by $L2$-penalized
logistic regression to two datasets with common
feature vectors but label sets differing for $m$ vectors indexed by
$\{i_{k}\}, k=1,\dots,m$, define $\eta \triangleq \tfrac{1}{2\lambda}
{\sum_{k=1}^m}  \lvert \lvert \tilde{\mathbf{x}}_{i_k}\rvert \rvert$ and
$b_{\pm \eta}(\mathbf{w}_2,i) \triangleq \sigma(\mathbf{w}_2^{\top}\mathbf{\tilde{x}}_i) - \sigma(
\mathbf{w}_2^{\top}\mathbf{\tilde{x}}_j \pm 2\eta\lvert \lvert \mathbf{\tilde{x}}_i \rvert \rvert)$.
For any $\mathbf{\tilde{x}}_i$, $  \lvert  \sigma(\mathbf{w}_1^{\top}\mathbf{\tilde{x}}_i) - \sigma(\mathbf{w}_2^{\top}\mathbf{\tilde{x}}_i)\rvert
  \leq \max(\lvert b_{\eta}(\mathbf{w}_2,i) \rvert,\lvert b_{-\eta}(\mathbf{w}_2,i) \rvert)$.
\label{prop_logistic_diff}
\end{proposition}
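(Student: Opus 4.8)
The plan is to reduce everything to two ingredients: (i) a bound on the distance $\lVert \mathbf{w}_1 - \mathbf{w}_2 \rVert$ between the two regularized logistic regression minimizers, and (ii) the Lipschitz-type control of $\sigma$ that lets us translate a bound on $\lVert \mathbf{w}_1 - \mathbf{w}_2 \rVert$ into a bound on $\lvert \sigma(\mathbf{w}_1^\top \tilde{\mathbf{x}}_i) - \sigma(\mathbf{w}_2^\top \tilde{\mathbf{x}}_i)\rvert$. For (i), I would invoke the cited Theorem 1 and Lemma 1 of~\cite{sivan2019}: changing a single training label perturbs the L2-penalized logistic objective's gradient by a term proportional to the corresponding feature vector, and strong convexity (with parameter $\lambda$ from the L2 penalty) converts this into a bound on the optimizer shift. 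Summing over the $m$ differing labels and tracking constants yields $\lVert \mathbf{w}_1 - \mathbf{w}_2 \rVert \le \tfrac{1}{2\lambda}\sum_{k=1}^m \lVert \tilde{\mathbf{x}}_{i_k}\rVert = \eta$. (The factor of $\tfrac12$ comes from the fact that the gradient of the logistic loss in the label difference is bounded by $\tfrac12$ the feature norm, since $\sigma$ ranges in $(0,1)$ but the relevant increment is $\sigma(\cdot) - y$ with $y$ flipped; this is exactly what Lemma 1 of~\cite{sivan2019} packages.)

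Next I would write $\lvert \sigma(\mathbf{w}_1^\top \tilde{\mathbf{x}}_i) - \sigma(\mathbf{w}_2^\top \tilde{\mathbf{x}}_i)\rvert$ and bound the argument shift by Cauchy--Schwarz: $\lvert \mathbf{w}_1^\top \tilde{\mathbf{x}}_i - \mathbf{w}_2^\top \tilde{\mathbf{x}}_i\rvert \le \lVert \mathbf{w}_1 - \mathbf{w}_2\rVert \, \lVert \tilde{\mathbf{x}}_i\rVert \le \eta \lVert \tilde{\mathbf{x}}_i\rVert$. Hence $\mathbf{w}_1^\top \tilde{\mathbf{x}}_i$ lies in the interval $[\mathbf{w}_2^\top \tilde{\mathbf{x}}_i - \eta\lVert\tilde{\mathbf{x}}_i\rVert,\ \mathbf{w}_2^\top \tilde{\mathbf{x}}_i + \eta\lVert\tilde{\mathbf{x}}_i\rVert]$. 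Now instead of using the global Lipschitz constant $\tfrac14$ of $\sigma$ (which would give the cruder bound $\tfrac14 \eta \lVert\tilde{\mathbf{x}}_i\rVert$), I would exploit monotonicity of $\sigma$: since $\sigma$ is increasing, the maximum of $\lvert \sigma(\mathbf{w}_1^\top\tilde{\mathbf{x}}_i) - \sigma(\mathbf{w}_2^\top\tilde{\mathbf{x}}_i)\rvert$ over all admissible $\mathbf{w}_1$ is attained at one of the two endpoints of the interval, i.e. it equals $\max\big(\lvert \sigma(\mathbf{w}_2^\top\tilde{\mathbf{x}}_i) - \sigma(\mathbf{w}_2^\top\tilde{\mathbf{x}}_i + \eta\lVert\tilde{\mathbf{x}}_i\rVert)\rvert,\ \lvert \sigma(\mathbf{w}_2^\top\tilde{\mathbf{x}}_i) - \sigma(\mathbf{w}_2^\top\tilde{\mathbf{x}}_i - \eta\lVert\tilde{\mathbf{x}}_i\rVert)\rvert\big)$. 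Comparing with the definition of $b_{\pm\eta}(\mathbf{w}_2,i)$ in the statement (reading the stray index $j$ there as $i$, and the factor $2\eta$ as coming from the convention in~\cite{sivan2019} that bounds $\lVert \mathbf{w}_1 - \mathbf{w}_2\rVert \le 2\eta$, or equivalently absorbing a factor of two into $\eta$), this is precisely the claimed inequality.

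The one genuinely delicate point — and the place I would be most careful — is matching the constants and conventions of~\cite{sivan2019} to the normalization used here: whether their bound is stated as $\lVert \mathbf{w}_1 - \mathbf{w}_2\rVert \le \eta$ or $\le 2\eta$, whether the logistic loss carries a factor of $\tfrac12$ or a sample-average $\tfrac1n$, and whether $\lambda$ multiplies $\lVert\mathbf{w}\rVert^2$ or $\tfrac12\lVert\mathbf{w}\rVert^2$. Any of these shifts the factor in front of $\eta\lVert\tilde{\mathbf{x}}_i\rVert$ inside $b_{\pm\eta}$, so the definitions of $\eta$ and $b_{\pm\eta}$ in the proposition must be taken verbatim as the bookkeeping that makes the final chain tight. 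Everything else — strong convexity, Cauchy--Schwarz, monotonicity of $\sigma$ — is routine. The remaining steps then just assemble: bound the parameter shift, pass through the sigmoid endpoints, and identify the result with $\max(\lvert b_\eta(\mathbf{w}_2,i)\rvert, \lvert b_{-\eta}(\mathbf{w}_2,i)\rvert)$.
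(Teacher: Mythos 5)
Your proposal takes essentially the same route as the paper, whose proof of Proposition~\ref{prop_logistic_diff} is simply to invoke Theorem~1 of~\cite{sivan2019} (the strong-convexity stability bound on the optimizer shift under label changes) together with its Lemma~1 (the Cauchy--Schwarz plus monotonicity-of-$\sigma$ endpoint argument), i.e.\ exactly the two ingredients you reconstruct, with details deferred to the supplementary material. One bookkeeping caveat: your heuristic for the extra $\tfrac12$ is off --- flipping a $\{0,1\}$ label changes the per-example gradient $(\sigma(\mathbf{w}^{\top}\mathbf{\tilde{x}})-y)\mathbf{\tilde{x}}$ by exactly $\lVert\mathbf{\tilde{x}}\rVert$, so under the $\tfrac{\lambda}{2}\lVert\mathbf{w}\rVert^{2}$ convention the shift bound is $\lVert\mathbf{w}_1-\mathbf{w}_2\rVert\leq 2\eta$ rather than $\eta$, which is precisely why $b_{\pm\eta}$ carries the $2\eta\lVert\mathbf{\tilde{x}}_i\rVert$ displacement (and the $j$ there is indeed a typo for $i$); your endpoint argument then goes through verbatim with $2\eta$, so the conclusion is unaffected.
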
  
The following result characterizes the risk error when we perform 
$L2$-regularized binary logistic regression on
$\{\mathbf{Y}'_{\mathcal{L}_t}\cup \{\mathbf{y}_q=k\}\}$ to derive
weights $\mathbf{w}_{2,k}$ for $k\in\{0,1\}$. Define $\eta_q \triangleq \tfrac{1}{2\lambda}\left( \lvert\lvert \mathbf{\tilde{x}}_q \rvert\rvert
+  \lvert\lvert \mathbf{\tilde{x}}_{q^*_{t-1}} \rvert\rvert \right)$, and
$\tilde{b}(\eta_q,\mathbf{w}_2,i) =
\displaystyle{\max_{k\in\{0,1\}}}\max\left(|b_{+\eta_q}(\mathbf{w}_{2,k},i)|,
  |b_{-\eta_q}(\mathbf{w}_{2,k},i)|\right)$.
\begin{theorem}
The risk error arising from applying binary L2-regularized logistic
regression with regularization parameter $\lambda$ to two labelled datasets
$\mathbf{Y}_{\mathcal{L}_t}$ and $\mathbf{Y}'_{\mathcal{L}_t}$ that differ by one label,
associated with the node $q^*_{t-1}$, is bounded as:
\begin{equation}
  \lvert R^{+q}_{|\mathbf{Y}_{\mathcal{L}_t}} -
  R^{+q}_{|\mathbf{Y}'_{\mathcal{L}_t}}\rvert \leq
  \dfrac{1}{|\mathcal{U}_t^{\shortminus q}|} \displaystyle{\sum_{i \in
      \mathcal{U}_t^{\shortminus q}}} \tilde{b}(\eta_q,\mathbf{w}_2,i)\,.\nonumber
\end{equation}
\label{thm:binary}
\end{theorem}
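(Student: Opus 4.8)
The plan is to reduce the risk difference to differences between predictions of $L2$-regularized logistic regression models fit to label sets that differ in a controlled way, and then invoke Proposition~\ref{prop_logistic_diff}. First I would expand both risks as $R^{+q}_{|\mathbf{Y}} = \tfrac{1}{|\mathcal{U}_t^{\shortminus q}|}\sum_{i \in \mathcal{U}_t^{\shortminus q}}\sum_{k \in \{0,1\}} \varphi^{+q}_{i,k,\mathbf{Y}}\, p(\mathbf{y}_q = k \mid \mathbf{Y})$ and apply the triangle inequality over $\mathcal{U}_t^{\shortminus q}$, so that it suffices to bound, for each fixed $i$, the quantity $\big| \sum_k \varphi^{+q}_{i,k,\mathbf{Y}_{\mathcal{L}_t}} p(\mathbf{y}_q = k \mid \mathbf{Y}_{\mathcal{L}_t}) - \sum_k \varphi^{+q}_{i,k,\mathbf{Y}'_{\mathcal{L}_t}} p(\mathbf{y}_q = k \mid \mathbf{Y}'_{\mathcal{L}_t}) \big|$ by $\tilde{b}(\eta_q, \mathbf{w}_2, i)$.

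Next I would split this per-node quantity by inserting the hybrid term $\sum_k \varphi^{+q}_{i,k,\mathbf{Y}'_{\mathcal{L}_t}} p(\mathbf{y}_q = k \mid \mathbf{Y}_{\mathcal{L}_t})$, producing a \emph{model-shift} contribution $\sum_k p(\mathbf{y}_q = k \mid \mathbf{Y}_{\mathcal{L}_t})\big( \varphi^{+q}_{i,k,\mathbf{Y}_{\mathcal{L}_t}} - \varphi^{+q}_{i,k,\mathbf{Y}'_{\mathcal{L}_t}} \big)$ and a \emph{query-distribution-shift} contribution $\sum_k \big( p(\mathbf{y}_q = k \mid \mathbf{Y}_{\mathcal{L}_t}) - p(\mathbf{y}_q = k \mid \mathbf{Y}'_{\mathcal{L}_t}) \big)\varphi^{+q}_{i,k,\mathbf{Y}'_{\mathcal{L}_t}}$. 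For the model-shift contribution, observe that for each $k$ the weights $\mathbf{w}_{1,k} \triangleq \mathbf{W}_{\mathbf{Y}_{\mathcal{L}_t},+q,\mathbf{y}_k}$ and $\mathbf{w}_{2,k} \triangleq \mathbf{W}_{\mathbf{Y}'_{\mathcal{L}_t},+q,\mathbf{y}_k}$ are fit to datasets with identical feature vectors whose labels differ only at $q^*_{t-1}$, so Proposition~\ref{prop_logistic_diff} applies with $m = 1$ and $i_1 = q^*_{t-1}$; using additionally that $s \mapsto \max(s, 1-s)$ is $1$-Lipschitz (hence $|\varphi^{+q}_{i,k,\mathbf{Y}_{\mathcal{L}_t}} - \varphi^{+q}_{i,k,\mathbf{Y}'_{\mathcal{L}_t}}| \le |\sigma(\mathbf{w}_{1,k}^{\top}\tilde{\mathbf{x}}_i) - \sigma(\mathbf{w}_{2,k}^{\top}\tilde{\mathbf{x}}_i)| \le \max(|b_{\eta_1}(\mathbf{w}_{2,k},i)|, |b_{-\eta_1}(\mathbf{w}_{2,k},i)|)$ with $\eta_1 = \tfrac{1}{2\lambda}\lvert\lvert \tilde{\mathbf{x}}_{q^*_{t-1}} \rvert\rvert$) and that $\sum_k p(\mathbf{y}_q = k \mid \mathbf{Y}_{\mathcal{L}_t}) = 1$, this bounds the model-shift contribution by $\max_k \max\big(|b_{\eta_1}(\mathbf{w}_{2,k},i)|, |b_{-\eta_1}(\mathbf{w}_{2,k},i)|\big)$. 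For the query-distribution-shift contribution, the two class-probability vectors each sum to one, so the term collapses to $\big( p(\mathbf{y}_q = 1 \mid \mathbf{Y}_{\mathcal{L}_t}) - p(\mathbf{y}_q = 1 \mid \mathbf{Y}'_{\mathcal{L}_t}) \big)\big( \varphi^{+q}_{i,1,\mathbf{Y}'_{\mathcal{L}_t}} - \varphi^{+q}_{i,0,\mathbf{Y}'_{\mathcal{L}_t}} \big)$; the second factor has absolute value at most $\tfrac12$ because binary $\varphi \le \tfrac12$, and the first factor is again a logistic regression prediction difference for label sets differing only at $q^*_{t-1}$, but now evaluated at the feature $\tilde{\mathbf{x}}_q$, which is where $\lvert\lvert \tilde{\mathbf{x}}_q \rvert\rvert$ enters the relevant perturbation radius via Proposition~\ref{prop_logistic_diff}.

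Finally I would combine the two contributions into the single term $\tilde{b}(\eta_q, \mathbf{w}_2, i)$. The point is that $\eta_q = \tfrac{1}{2\lambda}(\lvert\lvert \tilde{\mathbf{x}}_q \rvert\rvert + \lvert\lvert \tilde{\mathbf{x}}_{q^*_{t-1}} \rvert\rvert)$ simultaneously accounts for relabelling $q^*_{t-1}$ and for the feature vector of the query node through which the distribution shift acts, and that $\eta \mapsto \max(|b_{\eta}(\mathbf{w}_2,i)|, |b_{-\eta}(\mathbf{w}_2,i)|)$ is non-decreasing in $\eta$ with limiting value $\max(\sigma(\mathbf{w}_2^{\top}\tilde{\mathbf{x}}_i), 1 - \sigma(\mathbf{w}_2^{\top}\tilde{\mathbf{x}}_i)) \ge \tfrac12$; the slack between the $\eta_1$-level model-shift bound and the $\eta_q$-level term $\tilde{b}(\eta_q,\mathbf{w}_2,i)$ is then what absorbs the $\tfrac12\,\lvert p(\mathbf{y}_q = 1 \mid \mathbf{Y}_{\mathcal{L}_t}) - p(\mathbf{y}_q = 1 \mid \mathbf{Y}'_{\mathcal{L}_t}) \rvert$ piece. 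Summing over $i \in \mathcal{U}_t^{\shortminus q}$ and normalizing by $|\mathcal{U}_t^{\shortminus q}|$ gives the stated bound. I expect this last absorption step to be the main obstacle: closing both perturbation effects into one clean term with the right constant, rather than an unwieldy sum of two $b$-type quantities, requires the monotonicity and curvature of $b_{\pm\eta}$ together with careful bookkeeping of which feature norms enter each radius; the remaining ingredients are routine Lipschitz estimates and direct applications of Proposition~\ref{prop_logistic_diff}. The multiclass form of Theorem~\ref{thm:binary} follows the same template, with the binary ``probabilities sum to one'' collapse replaced by a total-variation estimate on the softmax output and the $1$-Lipschitz bound on $\max(\cdot)$ applied coordinatewise.
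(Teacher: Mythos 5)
Your decomposition into a \emph{model-shift} term and a \emph{query-distribution-shift} term is a genuinely different route from the paper, but it has a real gap at exactly the step you flag: the final ``absorption'' is not just delicate bookkeeping, it is false as stated. After your split, the distribution-shift piece is (up to the factor $\tfrac12$) the prediction difference $\lvert p(\mathbf{y}_q{=}1\mid\mathbf{Y}_{\mathcal{L}_t})-p(\mathbf{y}_q{=}1\mid\mathbf{Y}'_{\mathcal{L}_t})\rvert$, evaluated at the feature $\tilde{\mathbf{x}}_q$ with weights trained \emph{without} the query augmentation, whereas the target quantity $\tilde{b}(\eta_q,\mathbf{w}_2,i)$ is a perturbation bound on predictions at node $i$ under the augmented weights $\mathbf{w}_{2,k}$. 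Monotonicity of $\eta\mapsto\max(\lvert b_{\eta}\rvert,\lvert b_{-\eta}\rvert)$ only gives $\tilde{b}(\eta_q,\mathbf{w}_2,i)\ge\tilde{b}(\eta_1,\mathbf{w}_2,i)$; it says nothing quantitative about the gap, and the gap depends on $\lVert\tilde{\mathbf{x}}_i\rVert$ and $\mathbf{w}_{2,k}^{\top}\tilde{\mathbf{x}}_i$ while your leftover term depends on $\tilde{\mathbf{x}}_q$, $\tilde{\mathbf{x}}_{q^*_{t-1}}$ and entirely different weights. Concretely, if $\lVert\tilde{\mathbf{x}}_i\rVert$ is very small, both $\tilde{b}(\eta_1,\mathbf{w}_2,i)$ and $\tilde{b}(\eta_q,\mathbf{w}_2,i)$ are negligible, so the ``slack'' is negligible, yet the relabelling of $q^*_{t-1}$ can still shift the predictive probability at $q$ by a non-negligible amount; with your crude bound of $\tfrac12$ on $\lvert\varphi^{+q}_{i,1,\mathbf{Y}'_{\mathcal{L}_t}}-\varphi^{+q}_{i,0,\mathbf{Y}'_{\mathcal{L}_t}}\rvert$ the left side of your inequality then exceeds the right side. (The theorem itself remains true in that regime, because the per-node error probabilities at $i$ barely move for \emph{any} of the label sets -- which is precisely what your decomposition fails to exploit.)

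The paper avoids this entirely: it treats $\varphi^{+q}_{i,\cdot,\mathbf{Y}_{\mathcal{L}_t}}$ and $\varphi^{+q}_{i,\cdot,\mathbf{Y}'_{\mathcal{L}_t}}$ as bounded random variables under the two query-label distributions and uses the distribution-free fact that for $X\in[a,b]$, $Y\in[c,d]$, $\lvert E[X]-E[Y]\rvert\le\max(\lvert a-d\rvert,\lvert b-c\rvert)$. This reduces the per-node difference to the worst pairwise gap $\lvert\varphi^{+q}_{i,k_1,\mathbf{Y}_{\mathcal{L}_t}}-\varphi^{+q}_{i,k_2,\mathbf{Y}'_{\mathcal{L}_t}}\rvert$ over $k_1,k_2\in\{0,1\}$, so the shift in $p(\mathbf{y}_q\mid\cdot)$ never has to be bounded at all. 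The price is that the two augmented training sets may now differ in \emph{two} labels -- at $q^*_{t-1}$ and, when $k_1\neq k_2$, at $q$ itself -- which is why Proposition~\ref{prop_logistic_diff} is invoked with $m=2$ and why $\eta_q=\tfrac{1}{2\lambda}(\lVert\tilde{\mathbf{x}}_q\rVert+\lVert\tilde{\mathbf{x}}_{q^*_{t-1}}\rVert)$ contains $\lVert\tilde{\mathbf{x}}_q\rVert$; your reading of $\lVert\tilde{\mathbf{x}}_q\rVert$ as accounting for a distribution shift ``acting through $\tilde{\mathbf{x}}_q$'' is a misinterpretation of the bound's structure. If you replace your absorption step with this interval argument (keeping your correct Lipschitz bound $\lvert\min(p_1,1-p_1)-\min(p_2,1-p_2)\rvert\le\lvert p_1-p_2\rvert$ and applying the proposition with both differing indices), the proof closes; as written, it does not.
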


\begin{proof}[Sketch of Proof]
  We define the random variable $\varphi^{+q}_{i, \mathbf{Y}_{\mathcal{L}_t}}$
  which takes value $\varphi^{+q}_{i, k, \mathbf{Y}_{\mathcal{L}_t}}$ with
  probability $p(\mathbf{y}_q=k|\mathbf{Y}_{\mathcal{L}_t})$ for $k\in\{0,1\}$.
The difference in risk is:
\begin{align}
  \tfrac{1}{|\mathcal{U}_t^{\shortminus q}|}\sum_{i \in \mathcal{U}_t^{\shortminus q}} 
  E_{\mathbf{y}_q}[\varphi^{+q}_{i, \mathbf{Y}_{\mathcal{L}_t}}] -
  E_{\mathbf{y}_q} [\varphi^{+q}_{i,\mathbf{Y}'_{\mathcal{L}_t}}]\,, \label{eq:diffrisk}
\end{align}
where $E_{\mathbf{y}_q}$ denotes expectation over $\mathbf{y}_q$
conditioned on the observed label set, either $\mathbf{Y}_{\mathcal{L}_t}$ or $\mathbf{Y}'_{\mathcal{L}_t}$.
For query node $q$, for each $k_1, k_2 \in \{0,1\}$, we learn
weights $\mathbf{w}_{1,k_1}$ and $\mathbf{w}_{2,k_2}$ using $\{\mathbf{Y}_{\mathcal{L}_t} \cup
\{\mathbf{y}_q = k_1\}\}$ and $\{\mathbf{Y}'_{\mathcal{L}_t} \cup \{\mathbf{y}_q = k_2\}\}$,
respectively. For each $i \in \mathcal{U}_t^{\shortminus q}$, we have:
\begin{align}
\lvert \varphi^{+q}_{i,k_1, \mathbf{Y}_{\mathcal{L}_t}} - \varphi^{+q}_{i,k_2, \mathbf{Y}'_{\mathcal{L}_t}} \rvert &\leq \lvert \sigma(\mathbf{w}_{1,k_1}^{\top}\mathbf{\tilde{x}}_i) - \sigma(\mathbf{w}_{2,k_2}^{\top}\mathbf{\tilde{x}}_i)\rvert \,,\nonumber\\
&\leq \tilde{b}(\eta_q,\mathbf{w}_2,i)
\,.\label{ineq:phi_diff}
\end{align}
Here the first inequality follows from the definition of
$\varphi^{+q}_{i,k_1, \mathbf{Y}_{\mathcal{L}_t}}$ and the property that for
$0 \leq p_1, p_2 \leq 1$, 
$\lvert\min(p_1,1-p_1)-\min(p_2,1-p_2)\rvert \leq \lvert
p_1-p_2\rvert$. The second inequality follows from
Proposition~\eqref{prop_logistic_diff}, observing that the labels can
differ only for nodes $q^*_{t-1}$ and $q$.

Observe that for random variables $X$ and $Y$ taking values in $[a,b]$
and $[c,d]$, respectively, 
$\lvert E_{X}[X] - E_{Y}[Y] \rvert \leq \max \Big(
\lvert a-d\rvert, \lvert b-c\rvert\Big)$. Applying this
to~\eqref{eq:diffrisk} and then employing~\eqref{ineq:phi_diff} leads
to the stated bound on the risk error.

\end{proof}
The following bound applies for the case of multiclass
classification. For a given label set $\mathbf{Y}$, we learn weights $\mathbf{w}^{(k)}$ for each class
$k \in \{1,\dots,K\}$ using L2-regularized binary one-vs-all logistic
regression. The output prior to normalization for a given feature vector
$\mathbf{\tilde{x}}_i$ is then $\sigma(\mathbf{w}^{(k)\top}\mathbf{\tilde{x}}_i)$. We then normalize
by dividing by $C_i(\mathbf{w}) = \sum_{k=1}^K \sigma(\mathbf{w}^{(k)\top}\mathbf{\tilde{x}}_i)$ to
obtain a probability vector.
Let $\mathbf{w}_{2,k}^{(k')}$ be the weight
vector learned for class $k'$ using label data
$\{\mathbf{Y}'_{\mathcal{L}_t}\cup \{\mathbf{y}_q = k\}\}$. Let $\rho(\mathbf{w}_{2,k},\eta,i) \triangleq \displaystyle{\max_{k' \in K}}\max(\lvert b_{\eta}(\mathbf{w}^{(k')}_{2,k},i) \rvert,\lvert
   b_{-\eta}(\mathbf{w}^{(k')}_{2,k},i) \rvert)$ for $b_{\pm \eta}(\mathbf{w}_2,i) \triangleq \sigma(\mathbf{w}_2^{\top}\mathbf{\tilde{x}}_i) - \sigma(
\mathbf{w}_2^{\top}\mathbf{\tilde{x}}_i \pm 2\eta\lvert \lvert \mathbf{\tilde{x}}_i \rvert \rvert)$. 
Define 
\begin{align*}
\tilde{\beta}(\mathbf{w}_2,\eta,i) \triangleq \max_{k,k'}
&\left(\left\lvert \tfrac{\sigma(\mathbf{w}^{(k')\top}_{2,k}\mathbf{\tilde{x}}_i)}{C_{i}(\mathbf{w}_{2,k})} -\tfrac{\mathbf{w}^{(k')\top}_{2,k}\mathbf{\tilde{x}}_i-\rho(\mathbf{w}_{2,k},\eta,i)}{C_{i}(\mathbf{w}_{2,k})
                                           +4\rho(\mathbf{w}_{2,k},\eta,i)}\right\rvert,
  \right. \nonumber\\
 &\quad\left. \left\lvert\tfrac{\sigma(\mathbf{w}^{(k')\top}_{2,k}\mathbf{\tilde{x}}_i)}{C_{i}(\mathbf{w}_{2,k})} -
        \tfrac{\mathbf{w}^{(k')\top}_2\mathbf{\tilde{x}}_i+\rho(\mathbf{w}_2,\eta,i)}{C_{i}(\mathbf{w}_{2,k})
                                                          -4\rho(\mathbf{w}_2,\eta,i)}\right\rvert \right)\,.
\end{align*}                                                          
\begin{theorem}
The risk error arising from multiclass regression performed via
repeated one-vs-all L2-regularized logistic
regression with regularization parameter $\lambda$ to labelled datasets
$\mathbf{Y}_{\mathcal{L}_t}$ and $\mathbf{Y}'_{\mathcal{L}_t}$ that differ by one label,
associated with the node $q^*_{t-1}$, is bounded as:
\begin{equation}
\lvert R^{+q}_{|\mathbf{Y}_{\mathcal{L}_t}} -
R^{+q}_{|\mathbf{Y}'_{\mathcal{L}_t}}\rvert \leq
\dfrac{1}{|\mathcal{U}_t^{\shortminus q}|}
\displaystyle{\sum_{i \in \mathcal{U}_t^{\shortminus q}}}\tilde{\beta}(\mathbf{w}_2,\eta_q,i) \,,
\end{equation}
for $\eta_q \triangleq \tfrac{1}{2\lambda}\big( \lvert\lvert \mathbf{\tilde{x}}_q \rvert\rvert
+  \lvert\lvert \mathbf{\tilde{x}}_{q^*_{t-1}} \rvert\rvert \big)$.
\end{theorem}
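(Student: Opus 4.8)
The plan is to follow the skeleton of the proof of Theorem~\ref{thm:binary}, upgrading the binary error term to its multiclass form and tracking the extra normalisation constant $C_i(\cdot)$. As there, I would start by writing the risk gap exactly as in~\eqref{eq:diffrisk}, namely
\[
R^{+q}_{|\mathbf{Y}_{\mathcal{L}_t}} - R^{+q}_{|\mathbf{Y}'_{\mathcal{L}_t}} = \tfrac{1}{|\mathcal{U}_t^{\shortminus q}|}\sum_{i \in \mathcal{U}_t^{\shortminus q}}\Big( E_{\mathbf{y}_q}[\varphi^{+q}_{i,\mathbf{Y}_{\mathcal{L}_t}}] - E_{\mathbf{y}_q}[\varphi^{+q}_{i,\mathbf{Y}'_{\mathcal{L}_t}}]\Big)\,,
\]
apply the expectation-gap inequality $\lvert E_X[X]-E_Y[Y]\rvert\le\max(\lvert a-d\rvert,\lvert b-c\rvert)$ for $X\in[a,b]$, $Y\in[c,d]$ already used in that proof, and thereby reduce the task to bounding $\lvert\varphi^{+q}_{i,k_1,\mathbf{Y}_{\mathcal{L}_t}} - \varphi^{+q}_{i,k_2,\mathbf{Y}'_{\mathcal{L}_t}}\rvert$ for every $i\in\mathcal{U}_t^{\shortminus q}$ and every pair of query labels $k_1,k_2$, and then maximising. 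Since now $\varphi^{+q}_{i,k,\mathbf{Y}} = 1 - \max_{k'\in K}\sigma(\mathbf{w}^{(k')\top}_{\cdot,k}\tilde{\mathbf{x}}_i)/C_i(\mathbf{w}_{\cdot,k})$, the binary ``$\lvert\min-\min\rvert\le\lvert\cdot\rvert$'' step is replaced by $\lvert\max_{k'}a_{k'}-\max_{k'}b_{k'}\rvert\le\max_{k'}\lvert a_{k'}-b_{k'}\rvert$, so it is enough to bound, for each class $k'$, the difference of the two normalised coordinates $\sigma(\mathbf{w}^{(k')\top}_{1,k_1}\tilde{\mathbf{x}}_i)/C_i(\mathbf{w}_{1,k_1})$ and $\sigma(\mathbf{w}^{(k')\top}_{2,k_2}\tilde{\mathbf{x}}_i)/C_i(\mathbf{w}_{2,k_2})$.

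The new ingredient is to control this ratio perturbation. The key observation is that passing from $\mathbf{Y}_{\mathcal{L}_t}\cup\{\mathbf{y}_q=k_1\}$ to $\mathbf{Y}'_{\mathcal{L}_t}\cup\{\mathbf{y}_q=k_2\}$ flips at most two entries of the label vector --- those of $q^*_{t-1}$ and of $q$ --- so only the one-vs-all binary problems whose target class lies in $\{\mathbf{y}_{q^*_{t-1}},\hat{\mathbf{y}}_{q^*_{t-1}},k_1,k_2\}$ --- at most four of the $K$ problems --- have differing training sets, and each such problem differs in at most two data points. Proposition~\ref{prop_logistic_diff} applied with $m\le 2$ and $\eta=\eta_q=\tfrac{1}{2\lambda}(\lVert\tilde{\mathbf{x}}_q\rVert+\lVert\tilde{\mathbf{x}}_{q^*_{t-1}}\rVert)$ then gives $\lvert\sigma(\mathbf{w}^{(k')\top}_{1,k_1}\tilde{\mathbf{x}}_i) - \sigma(\mathbf{w}^{(k')\top}_{2,k_2}\tilde{\mathbf{x}}_i)\rvert\le\rho(\mathbf{w}_{2,k_2},\eta_q,i)$ for every $k'$ (the left side being exactly $0$ when that class is unaffected, so the uniform bound is legitimate), and adding up these at most four nonzero coordinate changes yields $\lvert C_i(\mathbf{w}_{1,k_1}) - C_i(\mathbf{w}_{2,k_2})\rvert\le 4\rho(\mathbf{w}_{2,k_2},\eta_q,i)$.

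To finish, I would close the ratio estimate by monotonicity. Writing $N_2=\sigma(\mathbf{w}^{(k')\top}_{2,k_2}\tilde{\mathbf{x}}_i)$, $D_2=C_i(\mathbf{w}_{2,k_2})$, and $\rho=\rho(\mathbf{w}_{2,k_2},\eta_q,i)$, the map $(N,D)\mapsto N/D$ is nondecreasing in $N\ge 0$ and nonincreasing in $D>0$; since $0\le\sigma(\mathbf{w}^{(k')\top}_{1,k_1}\tilde{\mathbf{x}}_i)\le C_i(\mathbf{w}_{1,k_1})$, the perturbed ratio lies in the interval with endpoints $(N_2-\rho)/(D_2+4\rho)$ and $(N_2+\rho)/(D_2-4\rho)$, so the per-coordinate difference is at most $\max\big(\lvert N_2/D_2-(N_2-\rho)/(D_2+4\rho)\rvert,\lvert N_2/D_2-(N_2+\rho)/(D_2-4\rho)\rvert\big)$. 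Maximising over $k'$ and over the query labels --- the bound depends only on the second labelled set, hence only on $k_2$, which we rename $k$ --- gives precisely $\tilde{\beta}(\mathbf{w}_2,\eta_q,i)$; plugging this back through the reduction of the first paragraph, summing over $i\in\mathcal{U}_t^{\shortminus q}$, and dividing by $|\mathcal{U}_t^{\shortminus q}|$ yields the stated bound.

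The step I expect to be the main obstacle is the denominator perturbation. One must make the ``at most four affected one-vs-all problems'' count fully rigorous, dealing with coincidences among $\mathbf{y}_{q^*_{t-1}}$, $\hat{\mathbf{y}}_{q^*_{t-1}}$, $k_1$ and $k_2$ and with the fact that a single affected binary problem can see two simultaneous flips (so one genuinely needs $m=2$ and the $\eta_q$ built from both feature norms), and one must handle the degenerate regime $D_2-4\rho\le 0$, where the closed form $(N_2+\rho)/(D_2-4\rho)$ is vacuous and one falls back on the trivial estimate $\lvert\varphi^{+q}_{i,k_1,\mathbf{Y}_{\mathcal{L}_t}}-\varphi^{+q}_{i,k_2,\mathbf{Y}'_{\mathcal{L}_t}}\rvert\le 1$. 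The remaining pieces --- the $\max$-of-$\max$ inequality, the monotonicity of $N/D$, and the expectation-gap inequality --- carry over essentially verbatim from the proof of Theorem~\ref{thm:binary}.
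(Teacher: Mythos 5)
Your proposal is correct and follows essentially the same route as the paper's argument: you reuse the binary sketch's expectation-gap and max-swap reductions, invoke Proposition~4.1 with $m\le 2$ and $\eta_q$ to get the $\pm\rho(\mathbf{w}_{2,k},\eta_q,i)$ perturbation of each one-vs-all output and the $\pm 4\rho$ perturbation of the normalizer $C_i(\cdot)$ (at most four affected binary problems), and close with monotonicity of the ratio --- exactly the structure encoded in the definition of $\tilde{\beta}(\mathbf{w}_2,\eta_q,i)$. The degenerate regime $C_i(\mathbf{w}_{2,k})-4\rho\le 0$ that you flag is a genuine subtlety of the stated bound itself (the interval argument presumes a positive lower endpoint for the denominator), and your explicit fallback there is a sensible way to handle it.
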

The proof is similar to that of Theorem~\ref{thm:binary},
but more involved, and is provided in the supplementary material.

\subsection{Combined method}
The most extreme case of active learning is when we start with only
one labelled node. In this scenario, the logistic regression model
cannot make useful predictions until at least some nodes have been
queried. To address this scenario, we combine our algorithm with
a label-propagation method. The aim is to first use label-propagation
when very few node labels are available, then switch to a combination
of both algorithms when more information is available, and finally
transition to the more accurate graph-cognizant logistic regression.
Bayesian model averaging provides a mechanism to make this transition~\cite{minka2002}. 

In Bayesian model averaging, we have $V$ different
classifiers and our belief is that one of these models is correct. We
start with a prior $p(v)$ over each model. After observing data
$D$, we compute the model evidence $p(D|v)$.
Using Bayes' rule, we can compute the posterior $p(v|D) =
p(v)p(D|v)/p(D)$ and then weight the predictions:
\begin{align}
p(y_i|D) = \sum_{v=1}^V p(y_i,v|
  D)  = \sum_{v=1}^V p(y_i|v, D)p(v|D). \nonumber
\end{align}
In the context of expected error minimization active learning, we need
to evaluate the risk associated with a query. We introduce a
model-dependent zero-one risk, and
in our combined method, we employ a model-averaged risk: 
\begin{equation}
R^{+q}_{|\mathbf{Y}_{\mathcal{L}}} = \sum_{v=1}^V  E_{\mathbf{y}_q}\Big[R(\mathbf{Y}_{\mathcal{L}}, \mathbf{y}_q, v)\Big]
p(v|\mathbf{Y}_{\mathcal{L}})\,.
\end{equation}
In order to compute (or approximate) this expression, we need to
evaluate $p(v|\mathbf{Y}_{\mathcal{L}})$. Assuming that we
have equal prior belief in the models available to us, this is
equivalent to calculating the marginal likelihood $p(\mathbf{Y}_{\mathcal{L}}|v)$. 

We incorporate two models, one
based on label propagation and the other based on logistic regression.
For the binary random field model that underpins the label propagation
classifiers, there are no learnable model parameters (there is one
fixed hyperparameter). Evaluating the
evidence $p(\mathbf{Y}_{\mathcal{L}}|v)$ is thus equivalent to computing $p(\mathbf{Y}_{\mathcal{L}})$ under
the BMRF model. This is a combinatorial problem, but we can factorize the joint probability into a chain rule of conditionals and use the
same two-stage approximation (TSA) that is employed
in~\cite{jun2016}. Additional details are provided in the supplementary material. We denote this evidence
approximation by $\lambda_{TSA,
  \mathbf{Y}_{\mathcal{L}}} \triangleq p(\mathbf{Y}_{\mathcal{L}}|BMRF)$.

For the logistic regression model, we are using $p(\mathbf{y}_i = k|v,\mathbf{W})=\sigma(\mathbf{W}^{\top}\mathbf{\tilde{x}}_i)^{(k)}$.
The joint probability of the complete labelled set is then evaluated as $p(\mathbf{Y}_{\mathcal{L}}|v,\mathbf{W}) = \prod_{\mathbf{y}_i \in \mathbf{Y}_{\mathcal{L}}}
\sigma(\mathbf{W}^{\top}\mathbf{\tilde{x}})^{(k_i)}$, where $k_i$ is the categorical index of $\mathbf{y}_i$.
We then have 
$p(\mathbf{Y}_{\mathcal{L}}|v) =
\int_\Theta \prod_{\mathbf{y}_i \in \mathbf{Y}_{\mathcal{L}}}\sigma(\mathbf{W}^{\top}\mathbf{\tilde{x}})^{(k_i)}p(\mathbf{W})\,d\mathbf{W}$.
To calculate the
evidence, we thus need to integrate over the weight matrix $\mathbf{W}$, which
is not analytically tractable. We choose to approximate $p(\mathbf{Y}_{\mathcal{L}}|v)$ by
$p(\mathbf{Y}_{\mathcal{L}}|v,\mathbf{W}_{\mathbf{Y}_\mathcal{L}})$, and we denote this as $\lambda_{LG, \mathbf{Y}_{\mathcal{L}}}\triangleq\prod_{\mathbf{y}_i \in \mathbf{Y}_{\mathcal{L}}}
\sigma(\mathbf{W}_{\mathbf{Y}_\mathcal{L}}^{\top}\mathbf{\tilde{x}})^{(k_i)}$. This leads to a sufficiently accurate approximation
of the evidence for our purpose (which is just to achieve an adaptive
balance between label propagation and SGC).

We use the TSA algorithm~\cite{jun2016} as the label propagation-based
estimator, leading to a new combined approach for selecting the query
node. We solve
\begin{align*}
\argmin_{q \in \mathcal{U}_t} R^{+q}_{|\mathbf{Y}_{\mathcal{L}}} = \bar{\lambda}_{LG, \mathbf{Y}_{\mathcal{L}}}
  R^{+q}_{|\mathbf{Y}_{\mathcal{L}}, LG} + \bar{\lambda}_{TSA,\mathbf{Y}_{\mathcal{L}}}, 
  R^{+q}_{|\mathbf{Y}_{\mathcal{L}}, TSA} \,,
\end{align*} 
where $\bar{\lambda}_{LG, \mathbf{Y}_{\mathcal{L}}}$ and $\bar{\lambda}_{TSA, \mathbf{Y}_{\mathcal{L}}}$ are our normalized estimates for the model evidences for the SGC model and the TSA label propagation model, respectively, after observing the data $\mathbf{Y}_{\mathcal{L}}$.

\begin{figure*} [ht]
\centering
\begin{tabular}{ccc}
  \includegraphics[scale=0.35]{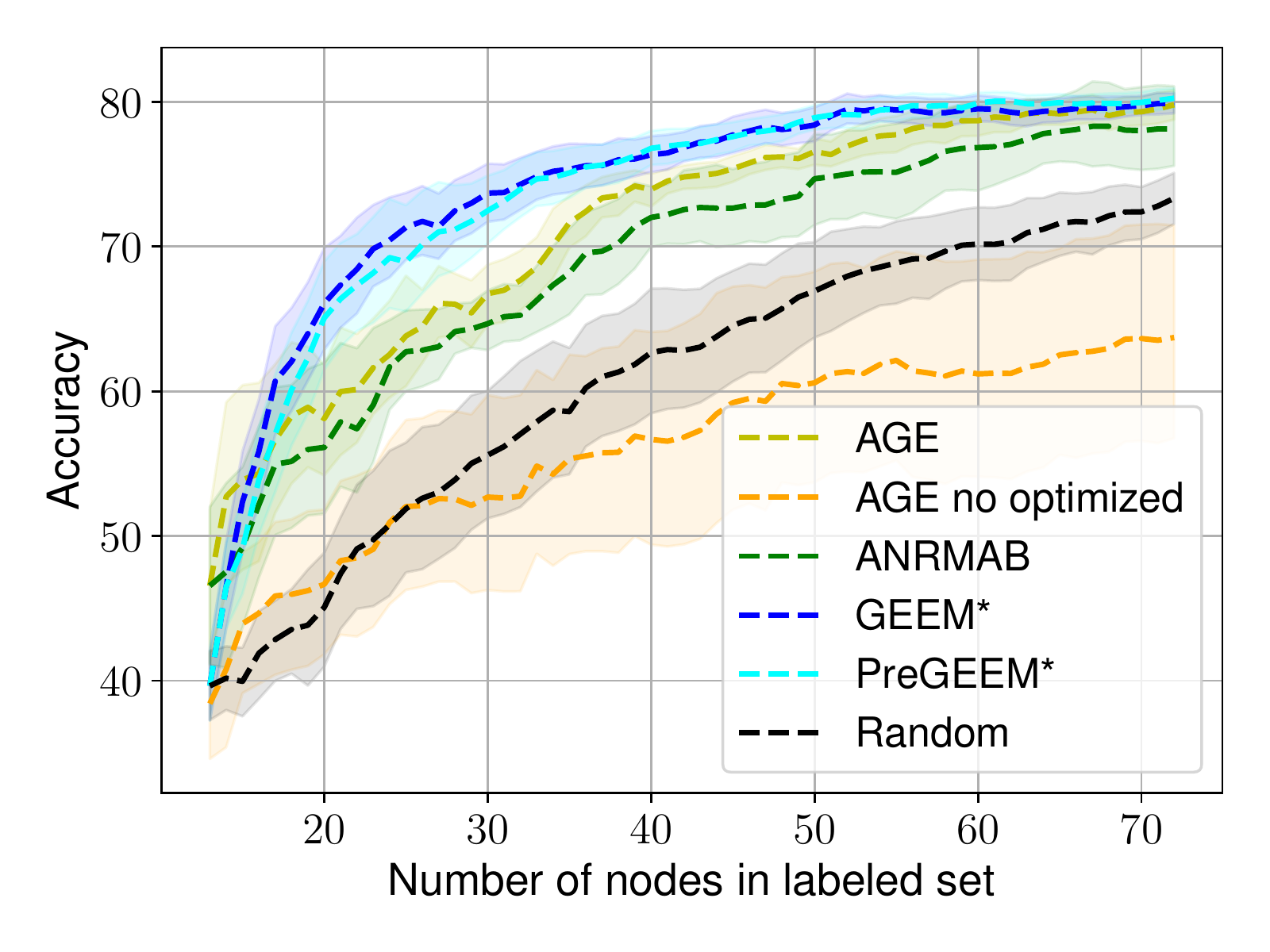} & \hspace{-2em}
             \includegraphics[scale=0.35]{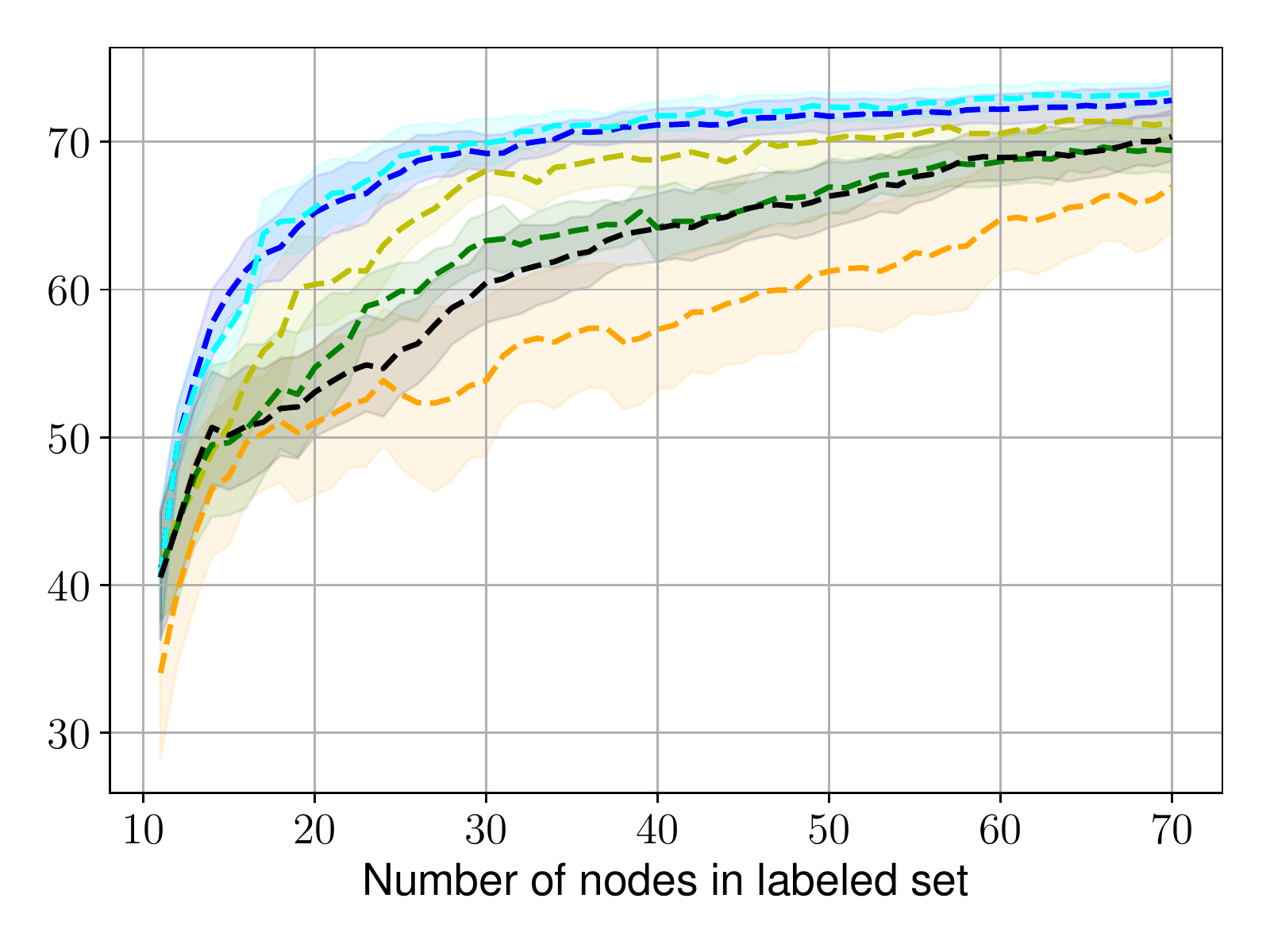} & \hspace{-2em}
        \includegraphics[scale=0.35]{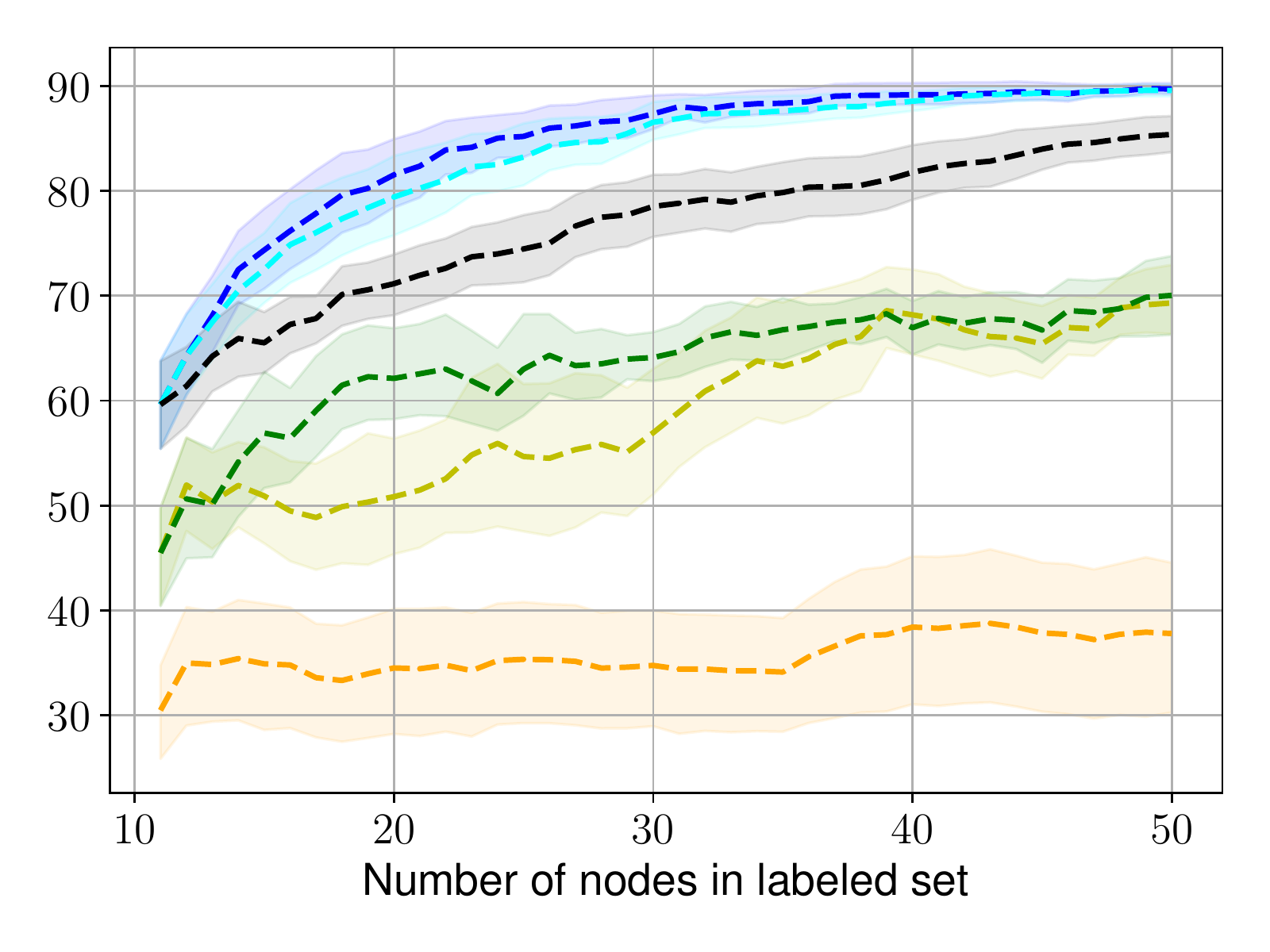} \\
  \text{\small{(a) Cora}}  & \text{\small{(b) Citeseer}  } & \text{\small{(c) Amazon-photo}  }  \\[12pt]
\end{tabular}
\caption{Comparison of performance of active learning algorithms for
  Experiment 1. Each point on a curve shows the mean classification
  accuracy achieved across 20 random partitions after the labelled set has been expanded to the
  indicated number of nodes. The shaded regions indicate 5/95
  confidence intervals on the means derived using bootstrap.}
\label{fig:res}
\end{figure*}

\section{Experiments}~\label{sec:experiment}
We examine performance using five of the node classification
benchmarks in~\cite{shchur2018}. Cora, Citeseer~\cite{sen2008} and Pubmed~\cite{namata2012} are citation datasets. Nodes represent journal articles and an undirected edge is included when one article cites another. The node features are bag-of-words representations of article content. Amazon-Photo and Amazon-Computers are graphs based on customers' co-purchase history records. For each dataset we isolate the largest connected component in the graph following~\cite{shchur2018}. The description of
the dataset statistics is shown in
Table~\ref{table:dataset_statistics}.

\begin{table}[htbp]
  \caption{Statistics of evaluation datasets.}
  \vspace{0.1cm}
  \label{table:dataset_statistics}
\centering
\resizebox{\columnwidth}{!}{%

\begin{tabular}{lccccc}
\toprule
\textbf{Dataset} & \textbf{Classe}s &\textbf{Features} & \textbf{Nodes} & \textbf{Edges} & \multicolumn{1}{c}{\textbf{\begin{tabular}[c]{@{}c@{}}Edge \\ Density\end{tabular}}}  \\ \midrule
\textbf{Cora} & 7 & 1,433 & 2,485 & 5,069 & 0.04\% \\ 
\textbf{Citeseer} & 6 & 3,703 & 2,110 &  3,668   & 0.04\% \\ 
\textbf{Pubmed} & 3 & 500& 19,717 &  44,324   & 0.01\% \\ 
\textbf{Am-comp.} & 10 & 767 & 13,381 &   245,778  & 0.07\% \\ 
\textbf{Am-photo} & 8 & 745 & 7,487 &   119,043  & 0.11\% \\ 
 \textbf{Microwave} & 2 & 19 & 322 &  5,753 &  5.54\% \\ \bottomrule
\end{tabular}
}
\label{data}
\end{table}

\subsection{Baselines} We compare the following active learning
algorithms: \textbf{(i) Random}: This baseline chooses a node to query
by uniform random selection, and then performs classification using SGC; \textbf{(ii) AGE}: The graph neural network based
algorithm proposed by~\cite{cai2017}; \textbf{(iii) ANRMAB}: The graph
neural network algorithm proposed by~\cite{gao2018b}, in which a
multi-arm bandit is used to adapt the weights assigned to the
different metrics used when constructing the score to choose a query
node; \textbf{(iv) TSA}: The label-propagation algorithm based on a
two-stage approximation of the BMRF model~\cite{jun2016}; \textbf{(v)
  EC-TV, EC-MSD}: The label-propagation algorithms based on a Gaussian
random field approximation to the BMRF model~\cite{berberidis2018};
\textbf{(vi) GEEM}: The
proposed algorithm based on SGC and
expected error minimization; \textbf{(vii) PreGEEM}: The proposed
algorithm with preemptive queries; \textbf{(viii) Combined} : The
proposed combined algorithm that uses Bayesian model averaging to adaptively
merge SGC and label propagation in an
EEM framework.

\subsection{Experimental Settings} For each experiment, we
report the average over 20 trials with different random
partitions. All GCNs and SGCs have 2 layers. The weight-adapting parameter of AGE is set
to the values in~\cite{cai2017} and to 0.995 for non-included
datasets.  For the larger datasets, Am-Photo, Am-Comp and Pubmed, we reduce
computational complexity for GEEM and PreGEEM by evaluating risk using
a subset of 500 nodes, selected randomly in an approach similar
to~\cite{roy2001}. This has minimal impact on performance. The GCN
hyperparameters are set to the values found by~\cite{shchur2018} to be
the best performing hyperparameter configurations. Early
stopping is not employed because access to a validation set is not a
reasonable assumption in an active learning setting. We also include a
``non-optimized'' version of AGE; this is because, in practice, we
would usually not have access to the tuned hyperparameters provided
by~\cite{shchur2018}, because these are derived using a large
validation set. For the non-optimized version of AGE, the
hyperparameter configuration for each trial was randomly selected from
the values considered in the grid search of~\cite{shchur2018}.
\begin{figure*}[ht]
  \par\medskip
  \begin{subfigure}[t]{0.24\textwidth}
  \includegraphics[scale=0.27]{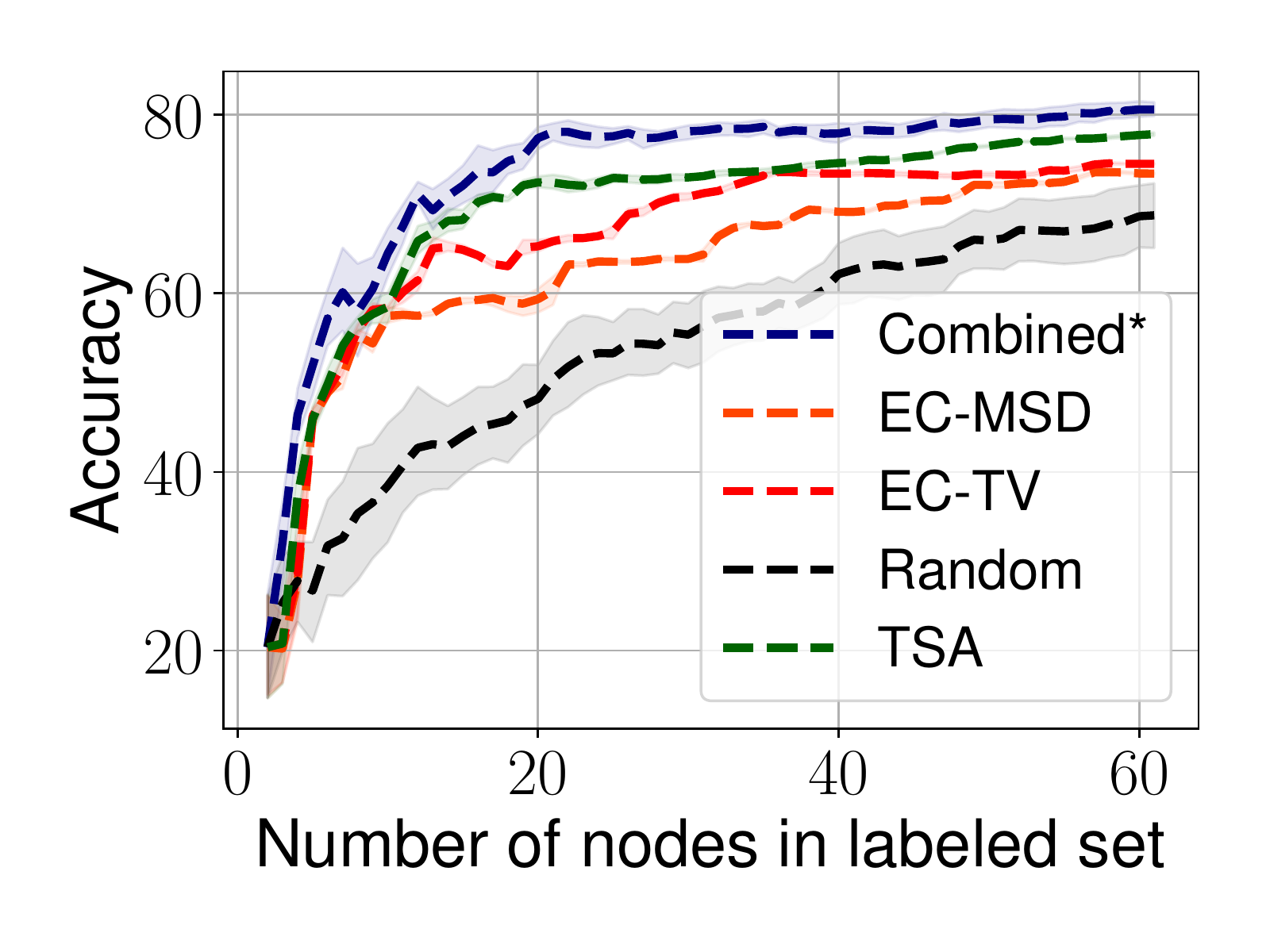}
  \vspace*{-5mm} \centering \caption{Cora - transductive}
    \label{fig:coratransdu}
    \end{subfigure}\hfill
  \begin{subfigure}[t]{0.24\textwidth}
\includegraphics[scale=0.27]{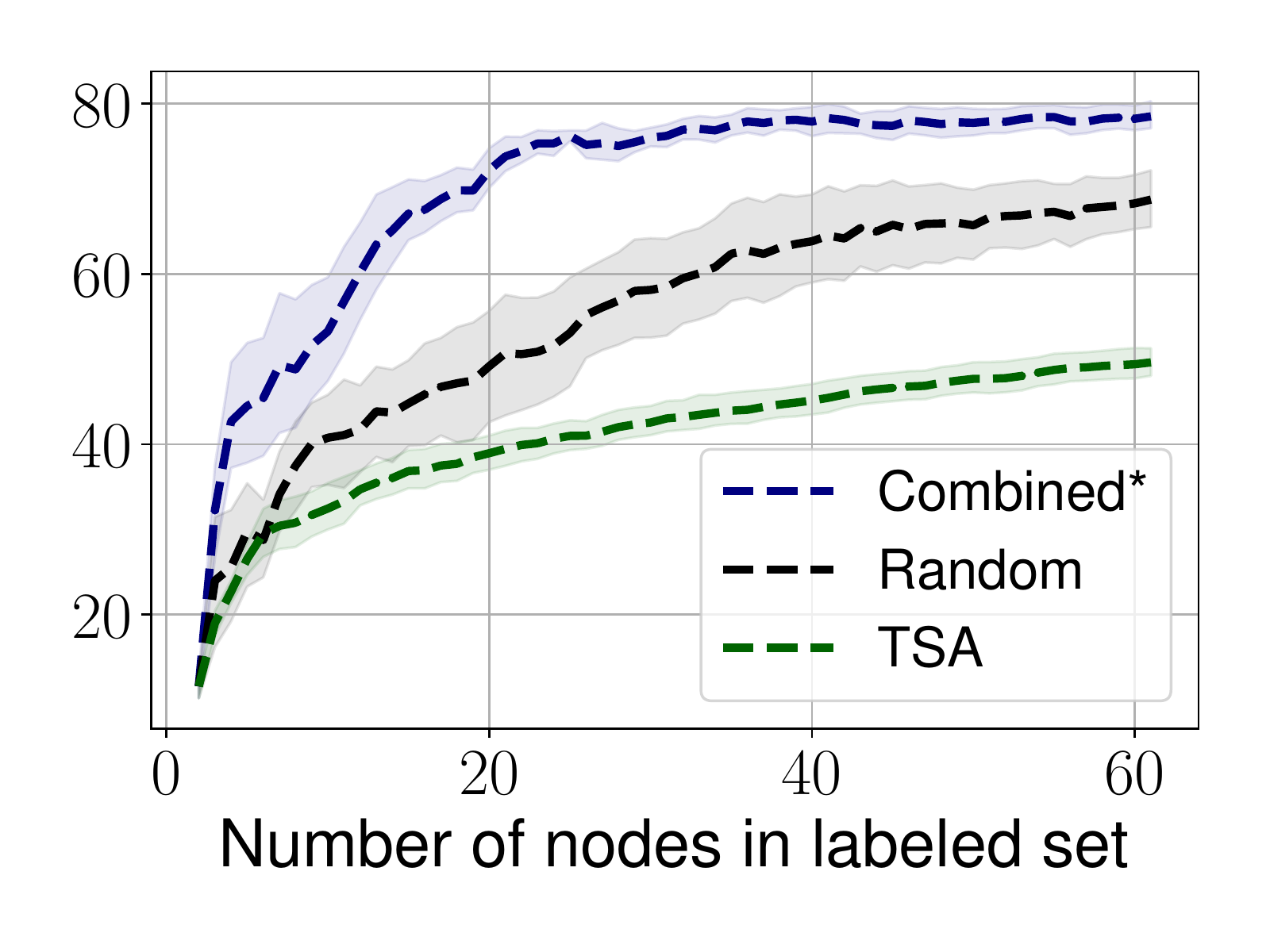}
    \vspace*{-5mm}\caption{Cora - inductive}
    \label{fig:coraindu}
    \end{subfigure}\hfill
\begin{subfigure}[t]{0.24\textwidth}
      \includegraphics[scale=0.27]{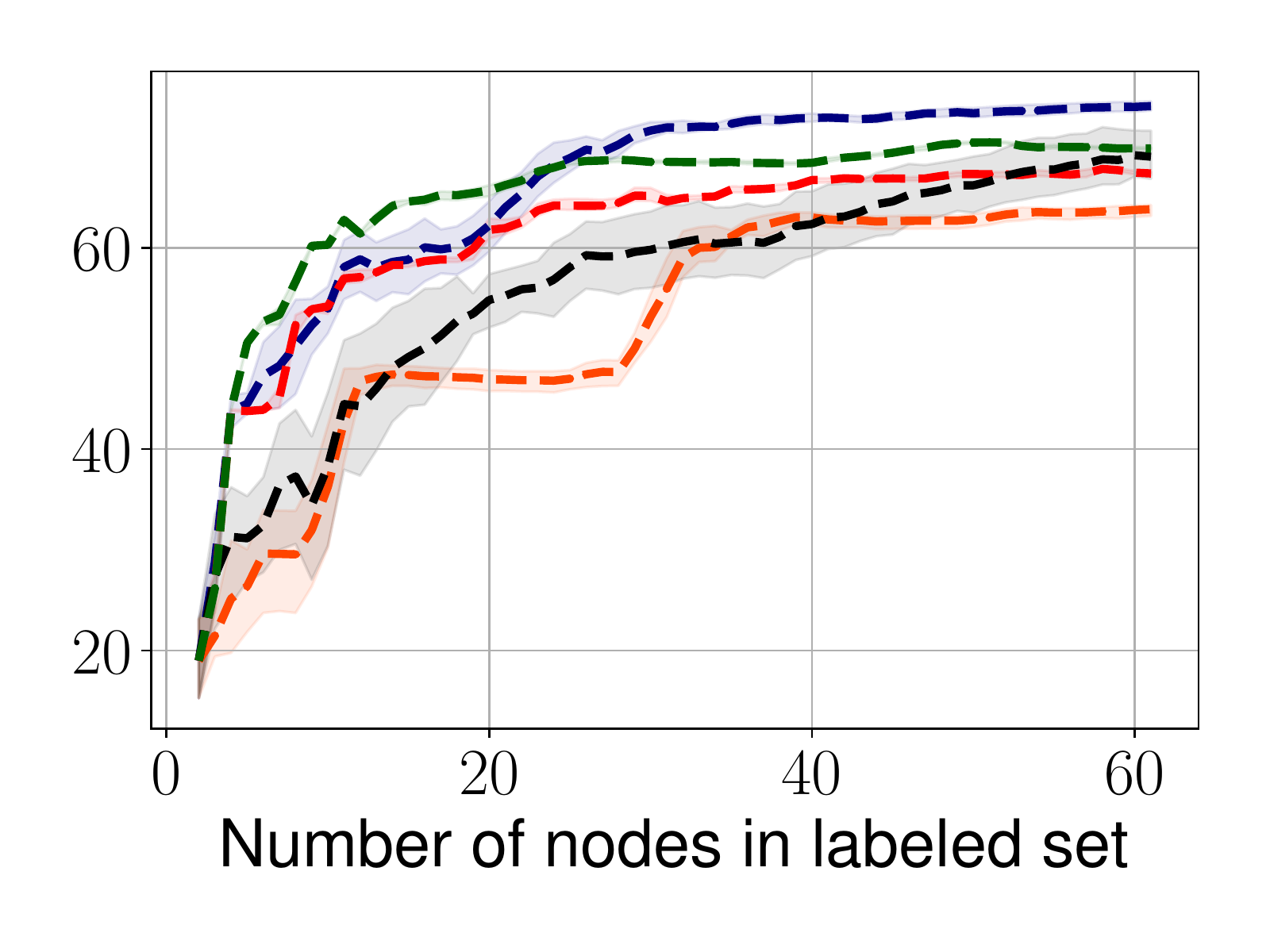}
    \vspace*{-5mm}\caption{Citeseer - transductive}
    \label{fig:webKBtransdu}
    \end{subfigure}\hfill
\begin{subfigure}[t]{0.24\textwidth}
     \includegraphics[scale=0.27]{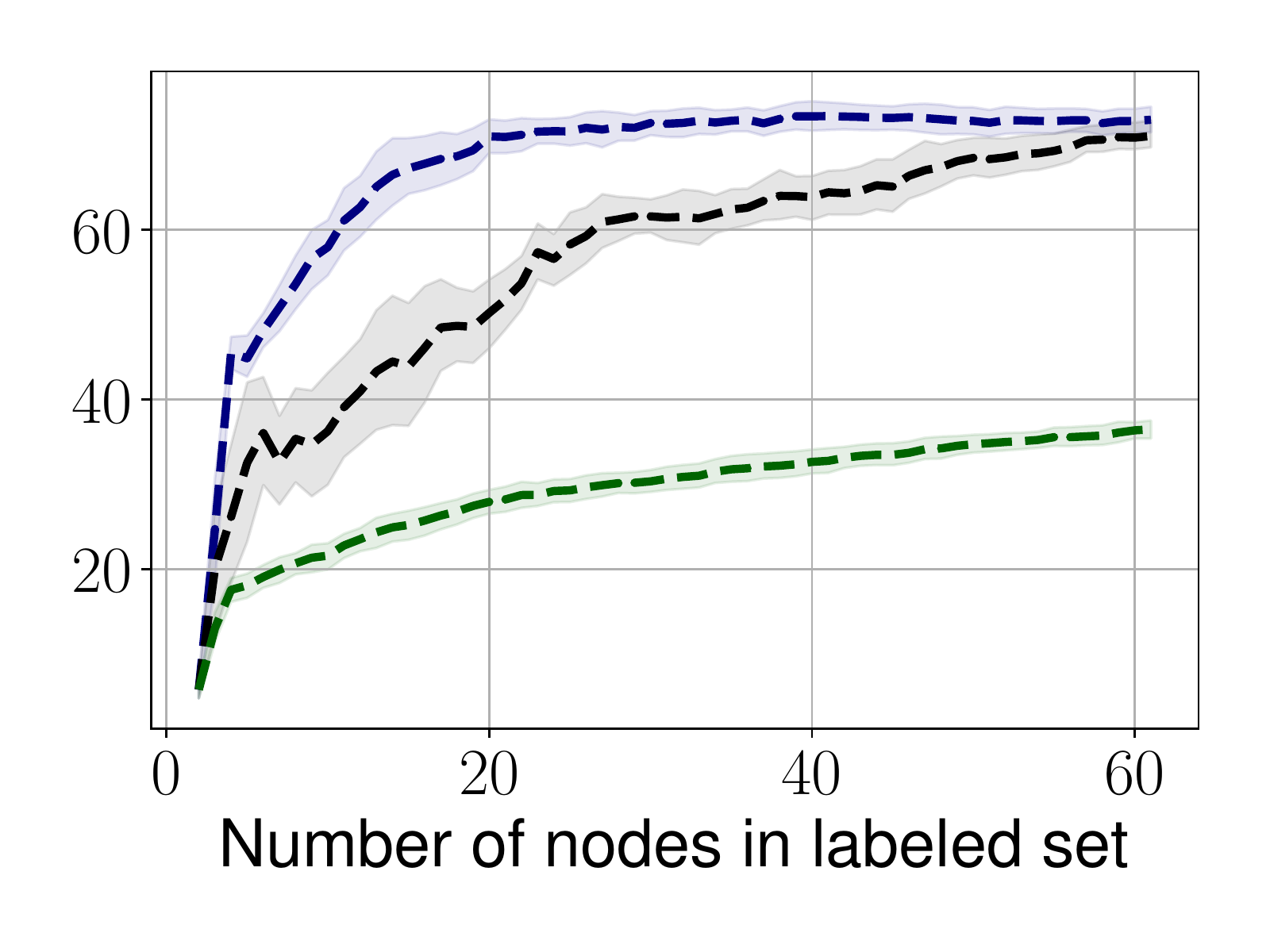}
    \vspace*{-5mm}\caption{Citeseer - inductive}
    \label{fig:webKBindu}
     \end{subfigure}\hfill
   
    \caption{Performance comparison between the label propagation algorithms and
      the proposed combined model-averaging expected error
      minimization method for the case when the initial label set
      consists of one random node. In the transductive
    setting, accuracy is evaluated across all unlabelled nodes; in the
  inductive setting, accuracy is evaluated on a held-out test set of
  nodes.}
\label{fig:indu_transdu}

\end{figure*}

{\em Experiment 1: Initial Labelled Set, Transductive}: Each algorithm
is initially provided with a small set of randomly chosen labelled
nodes. We evaluate performance on a set of test nodes comprising
20$\%$ of the unlabelled set. The algorithms cannot query nodes from
this evaluation set. Algorithms have access to the entire topology and
all node features. For the Cora and Citeseer datasets, we start with
$0.5\%$ of labelled nodes. We reduce this to $0.01\%$ for the larger
datasets to achieve similar initial set sizes. 

\begin{figure}
    \centering
    \includegraphics[scale=0.35]{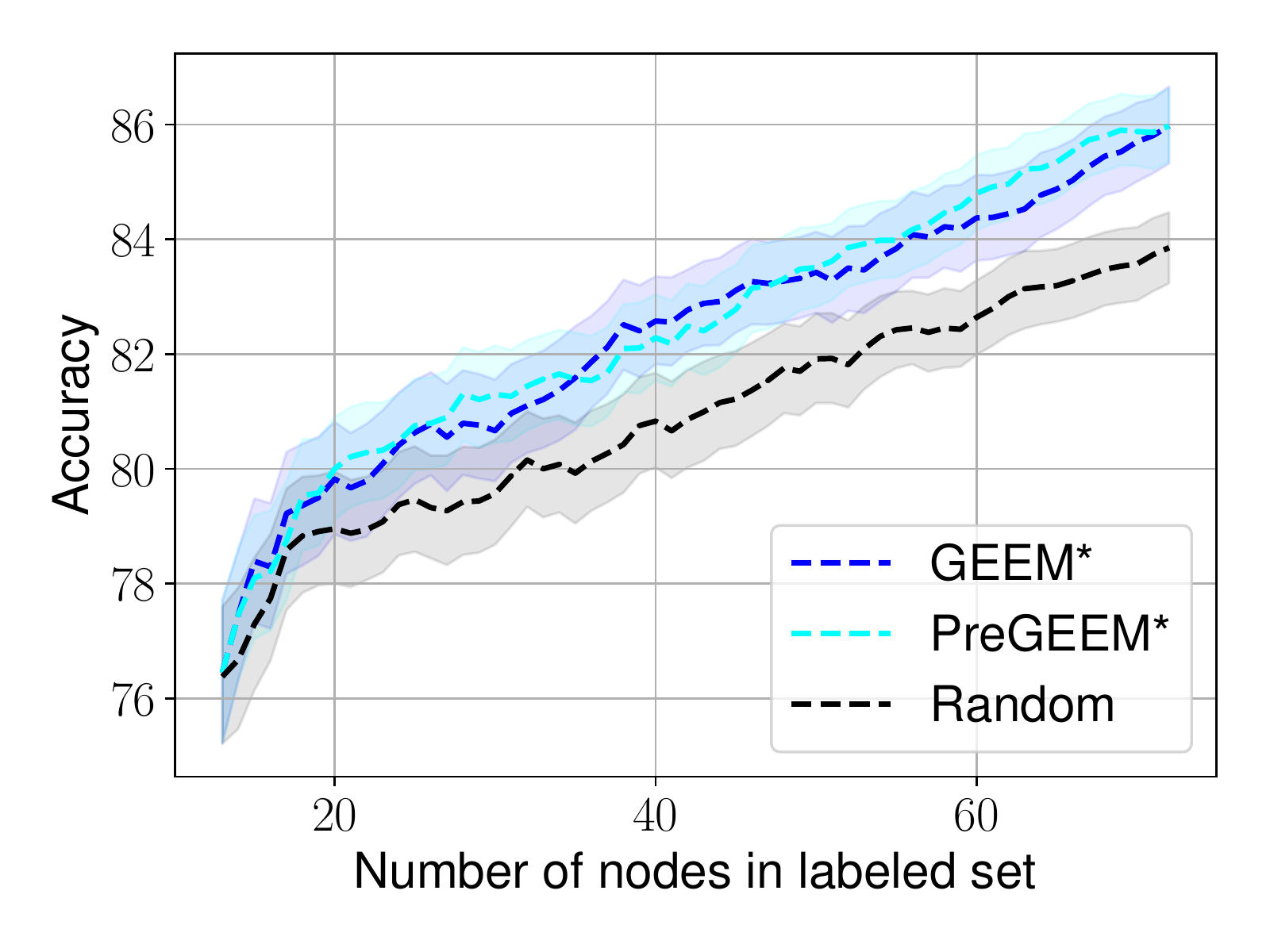}
    \caption{Performance of the active learning algorithms for
      detection of faulty links in the microwave link network.}
    \label{fig:prac}
\end{figure}

{\em Experiment 2: Single Labelled Node, Transductive and Inductive}:
Algorithms start with a single random labeled node. We examine two settings. In the
transductive setting, algorithms know the entire graph and can access
all features. Performance is assessed over all unlabelled
nodes. In the inductive setting, a portion of the graph is held out
for testing; the algorithms do not have access to the features and topology
information for these nodes.

\subsection{Results and Discussion} {\em Experiment 1}:
Figure~\ref{fig:res} and Table~\ref{table:acc} show how the
algorithms' accuracies change as more nodes are added to an initial
labelled set of size 10-20 nodes. Label propagation algorithm
performances are not shown because they are outperformed by the GCN
methods for this scenario. For all presented datasets, the proposed
algorithms outperform the other GCN-based methods. This holds even for
the cases when the hyperparameters of the GCN have been optimized
using a validation set. When the hyperparameters are not tuned, the
performance of the AGE algorithm deteriorates dramatically. It is
better to choose the query node randomly.  AGE outperforms the ANRMAB
algorithm for the datasets where its weight-adapting parameter was
tuned (Cora, Citeseer and Pubmed). The Random baseline and the proposed GEEM method use the same
classifier, so they differ only in the nodes that are queried.
Choosing an informative set of nodes using our proposed methods leads
to a substantially improved accuracy in all cases. For the Cora
dataset, the optimized GCN classifier initially outperforms the
SGC model. However GEEM quickly
outperforms as more nodes are queried, showing that the selection
algorithm is more effective.  Comparing PreGEEM performance against
GEEM, it is clear that the approximation has very little impact; there
is no clear performance difference between the two. 
\begin{table}[h!]
  \caption{Experiment 1 and Practical Application: Average accuracy at
    different budgets. Asterisks indicate that a Wilcoxon ranking test
    showed a significant difference (at $5\%$ significance level)
    between the marked method and the best performing baseline.}\label{table:acc}
\vspace{0.1cm}
\small{\begin{center}
\begin{tabular}{lccccc} \midrule[0.25ex]
\textbf{budget $b$} & \textbf{0}    & \textbf{1}    & \textbf{10}    & \textbf{30}    &    $b$ \\  \midrule[0.25ex]
\multicolumn{5}{c}{\textbf{Cora}}           & (60)  \\  \midrule
	\textbf{GEEM*}                                                    & 39.6          & 46.5  &       \textbf{69.8*} & \textbf{77.2*} & 79.9          \\
\textbf{PreGEEM*}                                                 & 39.6          & 46.5          & 68.2*          & 77.1*          & \textbf{80.3} \\
\textbf{Random}    & 39.6 &  40.2  & 49.7  & 63.0  & 73.3      \\
\textbf{AGE}                                                     & \textbf{46.6} & \textbf{52.7} & 61.6           & 74.9           & 79.8          \\
\textbf{ANRMAB}                                                     & \textbf{46.6} & 47.5          & 59.1           & 72.7           & 78.1          \\ \hline
 \multicolumn{5}{c}{\textbf{Citeseer}}           & (60)   \\ \midrule
	\textbf{GEEM*}                                                    & 40.5          & \textbf{49.7*}        & 65.8*         & 71.2           & 72.8          \\
\textbf{PreGEEM*}  & 40.5          & \textbf{49.7*}         & \textbf{66.5*}      & \textbf{71.8 }          & \textbf{73.3 }         \\
\textbf{Random}    & 40.5 &  44.1  & 53.8  & 64.4  & 70.4    \\
\textbf{AGE}                                                     &\textbf{ 41.2 }         & 44.7         & 60.5          & 69.1           & 71.4          \\
\textbf{ANRMAB}                                                     & \textbf{41.2 }         & 44.1         & 55.7          & 64.6          & 69.4         \\ \hline

 \multicolumn{5}{c}{\textbf{Pubmed}}           & (40)   \\ \midrule
\textbf{GEEM*}    &  52.3 & 58.1 & \textbf{72.6 } & \textbf{77.6}  & \textbf{78.7 } \\
\textbf{PreGEEM*}  & 52.3 & 58.1 & 69.3  & 77.2  & 78.0 \\
\textbf{Random}    & 52.3 & 54.1 & 64.7 & 72.3 & 73.9 \\
\textbf{AGE}   &   \textbf{57.3} & \textbf{60.8} & 70.4  & 76.7  & 78.1  \\
\textbf{ANRMAB}   & \textbf{ 57.3} & 58.8 & 69.5  & 74.1 & 75.7 \\ \hline

\multicolumn{5}{c}{\textbf{Am-photo}}           & (40) \\ \midrule
\textbf{GEEM*}                                                    & \textbf{59.6*}         &\textbf{ 64.3*  }        & \textbf{82.4*}           & \textbf{89.2*}           & \textbf{90.7*}        \\
\textbf{PreGEEM*}                                                 & \textbf{59.6}*          & \textbf{64.3*}         & 80.3*         & 88.8*         & 89.6         \\
\textbf{Random}    & \textbf{59.6*} &  61.4* &  72.0  & 82.3  & 87.6   \\
\textbf{AGE}                                                     & 45.5        & 52.0         & 51.5          & 67.8         & 69.3         \\
\textbf{ANRMAB}                                                     & 45.5        & 50.6         & 62.6          & 67.8         & 70.0         \\\hline
\multicolumn{5}{c}{\textbf{Am-comp.}}           & (40) \\ \midrule

\textbf{GEEM*}                                                    & \textbf{54.6*}          & \textbf{59.8*}          & \textbf{68.8*}           & 74.8*           & 76.8*         \\
\textbf{PreGEEM*}                                                 &\textbf{54.6* }       & \textbf{59.8*}          & 68.4*         & \textbf{76.5*}           & \textbf{77.5*}          \\

\textbf{Random}    & \textbf{54.6*} &  57.7* &  65.9 &  72.8 &  73.3 \\
\textbf{AGE}                                                     & 47.1        & 41.5         & 51.6          & 52.4          & 53.3         \\
\textbf{ANRMAB}                                                     & 47.1         & 49.4         & 54.6          & 58.7          & 58.5        \\\hline
 \multicolumn{5}{c}{\textbf{Microwave}}           & (60) \\ \midrule
\textbf{GEEM*}          &  \textbf{76.4 }  & \textbf{77.5}      & 80.1& \textbf{82.9*} & \textbf{86.0*}    \\
\textbf{PreGEEM*}    & \textbf{76.4 }  & \textbf{77.5 }     & \textbf{80.3} & 82.4* & \textbf{86.0*}   \\
\textbf{Random}                                                     & \textbf{76.4 }     & 76.7       & 79.1        & 81.0         & 83.9       \\
\textbf{AGE}                                                     & 69.1       & 68.3        & 70.3          & 70.3          & 75.1         \\
\textbf{ANRMAB}    & 69.1  & 67.2   &  72.3 & 73.5  &    73.2       \\
\hline
\end{tabular}
\end{center}}
\end{table}

{\em Experiment 2}: Figure~\ref{fig:indu_transdu} compares the
performance of the proposed Combined method with the label propagation
algorithms. In the transductive setting, the proposed method is much
better than Random selection. Since it incorporates the TSA technique, its
performance is similar to TSA  when few nodes have been queried. As
the number of labels increases, there starts to be a small but
significant improvement in accuracy. The inability of the label
propagation methods to adapt to the inductive setting is shown clearly
in Figures~\ref{fig:indu_transdu}(b) and~\ref{fig:indu_transdu}(d). In order to choose effective nodes
to query, these methods need to know the topology of the entire
graph. By contrast, the Combined method, which incorporates graph-based logistic
regression, achieves similar performance in both inductive and
transductive settings. 

\subsection{Practical Application}
\label{sec:prac}
To give a concrete motivating application for PreGEEM, we also report
the results of experiments on a private company dataset obtained from
measurements of a microwave link network. Currently, faulty links are
identified by human operators who must process lengthy performance log
files. The identification or labelling of a faulty link takes a few
minutes. Link performances vary substantially over time, so it is
necessary to repeatedly label data. It is desirable to automate the
faulty link detection procedure by training a classifier. Active
learning has the potential to substantially reduce the time an
operator must devote to the labelling task each week. For graphs the
size of common microwave link networks, the GEEM algorithm can return
a query in approximately one to two minutes, so this is an example
where the PreGEEM algorithm can compute the next query during the
labelling process.

The graph is constructed directly from the
physical topology and is important because graph-based classification
significantly outperforms classification algorithms that ignore the
network.  The features are link characteristics such as received
signal strength and signal distortion
metrics. Table~\ref{table:dataset_statistics} provides the statistics
of the dataset. We consider an experiment where an initial labelled set of 8 links is
provided, and the active learning algorithm must identify query
nodes. Table~\ref{table:acc} and Figure~\ref{fig:prac} compare the
performance of GEEM, PreGEEM, Random, AGE and ANRMAB. AGE and ANRMAB performs much worse than random selection because the GCN is
inaccurate for a small number of labels. GEEM and PreGEEM achieve a
small but significant improvement. 

\section{Conclusion}

We have introduced an active learning algorithm for node
classification on attributed graphs that uses SGC (a linearized GCN) in an expected
error minimization framework. Numerical experiments demonstrate that
the proposed method significantly outperforms existing active learning
algorithms for attributed graphs without relying on a validation set.  We also proposed a
preemptive algorithm that can generate a query while the oracle is
labelling the previous query, and showed experimentally that this
approximation does not impact the performance.

\bibliographystyle{icml2020}
\bibliography{reference}

\end{document}